\documentclass[12pt]{article}
\usepackage{microtype}
\usepackage{graphicx}
\usepackage{subfigure}
\usepackage{amsmath}
\usepackage{bbm}
\usepackage{amssymb}
\usepackage{mathtools}
\usepackage[shortlabels]{enumitem}
\usepackage{booktabs} %
\usepackage{amsmath,amssymb,amsfonts}
\usepackage{setspace}
\usepackage{graphicx}
\renewcommand\labelenumi{(\roman{enumi})}
\renewcommand\theenumi\labelenumi
\usepackage{amsthm}

\usepackage{bbm}

\usepackage{textcomp}
\usepackage{xcolor}
\usepackage{amsmath}
\usepackage{bbm}
\usepackage{multicol}

\def\lf{\left\lfloor}   
\def\rf{\right\rfloor}

\newcounter{relctr} %
\everydisplay\expandafter{\the\everydisplay\setcounter{relctr}{0}} %

\AtBeginDocument{} %

\newtheorem{theorem}{Theorem}%

\newtheorem{corollary}{Corollary}[theorem]
\newtheorem{proposition}[theorem]{Proposition}

\usepackage{natbib}

\newcommand{\beq}{\begin{eqnarray*}}
\newcommand{\eeq}{\end{eqnarray*}}
\newcommand{\beqn}{\begin{eqnarray}}
\newcommand{\eeqn}{\end{eqnarray}}

\newcommand{\hide}[1]{}

\usepackage{fancyvrb} %

\renewcommand{\P}{\mathbb{P}}

\DeclareMathOperator*{\argmin}{arg\,min}
\DeclareMathOperator*{\argmax}{arg\,max}

\newcommand{\nrm}[1]{\left\Vert #1 \right\Vert}

\newcommand{\vertiii}[1]{{\left\vert\kern-0.25ex\left\vert\kern-0.25ex\left\vert #1 
    \right\vert\kern-0.25ex\right\vert\kern-0.25ex\right\vert}}

\usepackage{hyperref}

\newcommand{\blind}{1}

\addtolength{\oddsidemargin}{-.5in}%
\addtolength{\evensidemargin}{-.5in}%
\addtolength{\textwidth}{1in}%
\addtolength{\textheight}{-.3in}%
\addtolength{\topmargin}{-.8in}%

\begin{document}

\def\spacingset#1{\renewcommand{\baselinestretch}%
{#1}\small\normalsize} \spacingset{1}

\if1\blind
{
  \title{\bf Near Optimal Inference for the Best-Performing Algorithm}
  \author{    Amichai Painsky\\
    Tel Aviv University, Israel}
 \date{} 
  \maketitle
} \fi

\bigskip

\begin{abstract}
Consider a collection of competing machine learning algorithms. Given their performance on a benchmark of datasets, we would like to identify the best performing algorithm. Specifically, which algorithm is most likely to rank highest on a  future, unseen dataset.  A natural approach is to select the algorithm that demonstrates the best performance on the benchmark. However, in many cases the performance differences are marginal and additional candidates may also be considered. This problem is formulated as subset selection for multinomial distributions. Formally, given a sample from a countable alphabet, our goal is to identify a minimal subset of symbols that includes the most frequent symbol in the population with high confidence. In this work, we introduce a novel framework for the subset selection problem. We provide both asymptotic and finite-sample schemes that significantly improve upon currently known methods. In addition, we provide matching lower bounds, demonstrating the favorable performance of our proposed schemes. 
\end{abstract}

\noindent%
{\it Keywords:}  Comparative Study, Benchmark Datasets, Rank Aggregation, Rank Inference, Experimental Study, Subset Selection for Multinomial Distributions
\vfill

\newpage
\spacingset{1.5} %

\section{Introduction}\label{Introduction}
In a standard comparative study, multiple machine learning algorithms are evaluated across a variety of datasets. Typically, the goal is to identify the \textit{best-preforming} algorithm. Unfortunately, the results are rarely conclusive and strongly depend on the performance criterion. For example, consider a study focusing on classification algorithms. The best-performing algorithm may be defined as the one that achieves the best results (in terms of Accuracy) on the average. Alternatively, we may be interested in the algorithm that is most likely to attain the best Accuracy over a future (unseen) dataset. In addition, we are typically interested in statistically significant conclusions. For example, is the top performer on the benchmark truly the best? Are there additional contenders? How many datasets are needed to confidently identify a single best-performing algorithm?

In this work, we address the fundamental problem of identifying the best-performing algorithm through statistical inference. Our objective is to construct the smallest possible subset of algorithms that, with high confidence, includes the top-performing algorithm. We argue that a natural performance measure is the probability of winning a future dataset. That is, we would like to identify the algorithm that is most likely to attain the best result (for example, highest Accuracy) on a future dataset. In practice, for every competing algorithm we first estimate the probability that it wins (attains the highest Accuracy) over an unseen dataset. Based on these estimates, we  identify a minimal subset that is likely to contain the most probable winner (hence, best-performing algorithm). While our objective is not new in the machine learning literature, existing statistical tests and post-hoc inference methods often lack sufficient power. We review and analyze these prior approaches in Section \ref{related work}.

Our main contribution is two-fold. First, we reformulate the proposed inference task as subset selection for multinomial distribution -- a problem that has been widely studied, with several notable contributions. Second, we introduce a novel solution to this problem that significantly improves upon currently known methods. We distinguish between asymptotic and finite sample regimes, and establish the near-optimality of proposed methods by deriving a matching lower bound.  Finally, we apply the resulting inference schemes to synthetic and real-world comparative studies, demonstrating the favorable results they achieve, compared to existing alternatives.

It is important to emphasize the practical implications of our work. In comparative studies, a collection of ML algorithms is applied to a variety of datasets. Naturally, data-handling, pre-processing and hyperparameter tuning have a significant role on reported results. In this work, we implicitly assume that the performance of each algorithm reflects the end-to-end pipeline, according to recommended guidelines. In that sense, these results represent the outcome a practitioner could realistically achieve. Our contribution lies in proposing a novel post-processing inference scheme, given these results.

\section{Related Work}\label{related work}

Given the results of multiple algorithms over a collection of datasets, we would like to identify the best-performing algorithm. There exist many approaches for this task which differ by their objective, statistical orientation, the form of exposition (qualitative or quantitative), visualization considerations and so forth. Here, we briefly review some of the more popular merits and discuss their advantages and caveats with respect to our goal.  

The \textit{average performance} is a simple averaging of the algorithms' results over the benchmark datasets, according to a predetermined figure of merit. For example, the averaged mean square error (MSE) for regression problems. While this measure is simple and intuitive, it fails to capture the differences between the datasets. For example, consider two datasets where the typical MSE in one dataset is of scale of $10^{-1}$, while in the other is of scale of $10^2$. It is quite misleading to directly average them, since the first dataset would be negligible. To overcome this difficulty, the \textit{average ranking} considers the average of the rankings that every algorithm achieves. The top performing algorithm is ranked first, followed by the runner-up (ranked second) and so forth. Hence, a lower averaged ranking implies better performance across all datasets. The average ranking is perhaps the most commonly used criterion for comparing multiple algorithms simultaneously \citep{demvsar2006statistical,fernandez2014we}. It is very intuitive and explainable. However, it does have some major caveats. For example, it implicitly assumes that all the rankings should be equally weighted. In other words, the difference between being ranked first and fifth is the same as between $21^{th}$ and $25^{th}$. Obviously, this assumption is problematic. While the former  ($1^{th}$ vs. $5^{th}$) suggests dominance, the latter is typically random, especially in the presence of many competing algorithms. Recently, \cite{fernandez2014we} proposed an additional measure of dominance. The Probability of Achieving the Best Accuracy (PAMA) criterion measures the percentage of datasets on which a given algorithm achieves the best result.  This measure is of high interest to most predictive modeling tasks, where the goal is to determine which algorithm to apply on field. In fact, it corresponds to a maximum likelihood estimator (MLE) for the probability of winning a future dataset. Notice that this criterion  implicitly assumes that the win probability only depends on past wins, and not the complete ranking. Yet, it directly considers our objective of interest. 

Once we establish our performance criterion, we are typically interested in a corresponding statistical test. A common example is pair-wise comparisons. For every pair of algorithms we test the null hypothesis that they perform equally well. We repeat this process for every pair of algorithms and correct for multiplicity \citep{demvsar2006statistical}. However, this routine is not always adequate. First, the number of hypotheses grows quadratically with the number of algorithms, which makes the multiplicity correction very conservative. Second, the result may not be clear enough. That is, a typical outcome may be ``Algorithm $A$ is better than $C$, and $B$ is better than $E$", as the rest of comparisons do not yield statistically significant conclusions. Naturally, there exist more sophisticated statistical tests for the problem. The ANOVA is a parametric test \citep{fisher1956statistical} which defines a null hypothesis that all the algorithms perform equally well. Notice that this approach does not suggest which is the better algorithm.  The Friedman $\chi^2$-test \citep{friedman1937use,friedman1940comparison}, and the enhanced $\text{F}$-test  \citep{iman1980approximations}, introduce simple and non-parametric statistics for the same task, which depend on the average ranks of the studied algorithms. These tests generalize the parametric ANOVA, which assumes an underlying normal distribution. If the null hypothesis is rejected, then we can proceed with a post-hoc test. The Nemenyi test \citep{nemenyi1963distribution} is similar to the Tukey test for ANOVA  \citep{tukey1949comparing} and is used when all the algorithms are compared to each other. Specifically, the performance of two algorithms is significantly different if their corresponding average ranks $R_u$ and $R_u$ differ by at least the critical difference $$\text{CD}=q_\delta\sqrt{\frac{A(A+1)}{6n}}$$ 
where $A$ is the number of algorithms, $n$ is the number of evaluated datasets and the critical values $q_\delta$ are based on the Studentized range statistic divided by $\sqrt{2}$ (see Table $5(a)$ of \cite{demvsar2006statistical}). 
Yet, it is important to emphasize that this test is typically not powerful enough due to its implicit correction for multiple comparison (as it grows with $A$). Specifically, as \cite{demvsar2006statistical} indicates, the Friedman test may report a significant difference but the post-hoc test fails to detect it. In addition, \cite{demvsar2006statistical} discusses the case where the algorithms are compared to a predefined control algorithm, or if multiple comparisons that control the \textit{family-wise error rate} (FWER) or the \textit{false discovery rate} (FDR) are of interest. These tests are less relevant to our problem of interest, as later discussed in Section \ref{experiments}. 

An additional related inference problem is \textit{rank verification}. Given the ordered performance (in any measure criterion), we would like to validate that the order is statistically significant. For example, assume that we are interested in the probability of winning future datasets, following PAMA \citep{fernandez2014we}. We count the number of wins per each algorithms, and sort the results accordingly. Given the sorted performance, can we conclude that the best-performing algorithm is really the one with the highest count? Is the runner-up really the second best? Are the ranks statistically significant? \cite{hung2019rank} answer these questions with a simple top-down procedure: they begin with an unadjusted pairwise test comparing the winner to the runner-up. If the test rejects at a predefined confidence level, they reject the null and declare that the winner is really the best. They continue by comparing the runner-up to the second runner-up, again using the unadjusted pairwise test, and so on. Interestingly, this procedure was shown to be exact. Additional notable contributions to the rank verification problem appear, for example, in \cite{gutmann1987selected,bofinger11991selecting,karnnan2009does,maymin1992testing,finner2002partitioning}. We discuss rank verification and compare it to our proposed inference scheme later in Section \ref{experiments}.

\section{Definitions and Problem Statement}\label{Definitions}

Let us now formulate our problem of interest. Let $\mathcal{X}$ be a collection of $|\mathcal{X}|=A$ algorithms. Let $p$ be a probability distribution $p$ over $\mathcal{X}$. We define $p_u$ as the probability that algorithm $u$ wins a future dataset. 
Let $X^n=X_1,...,X_n$ be a sample of $n$ independent observations from $p$. That is, $X_i=u$ corresponds to the event that algorithm $u$ won (attained the best result over) the $i^{th}$ dataset.  
Denote the sorted values of $p$ as $p_{[\cdot]}=p_{[1]}\geq p_{[2]}\geq ... \geq p_{[A]}$. Let $s=\argmax_{u \in \mathcal{X}}p_u$  be the most probable symbol in the sample, while $p_{s}=\max_{u \in \mathcal{X}}p_u=p_{[1]}$ is its corresponding probability. In our context, $s$ is the best-performing algorithm and $p_s$ is its win probability.  
Throughout this manuscript we assume that the maximum $s$ is unique. That is, there is a single algorithm that is most likely to win a future algorithm. We drop this assumption later in Section \ref{multiple maxima}. 

Let $N_u(X^n)=\sum_{i=1}^n \mathbbm{1}(X_i=u)$ be the number of wins of algorithm  $u$ in the sample. Let $\hat{p}$ be the maximum likelihood estimator of $p$. That is, $\hat{p}_u=N_u(X^n)/n$ for every $u \in \mathcal{X}$. Let $t=\argmax_{u \in \mathcal{X}}\hat{p}_u$  be the algorithm with the maximal number of wins in the sample, while $\hat{p}_{t}=\max_{u \in \mathcal{X}}\hat{p}_u=\hat{p}_{[1]}$ is its corresponding frequency. 
For the simplicity of the presentation we assume that $t$ is unique. We drop this assumption in Section \ref{multiple maxima} and show that our results still hold. 

In this work we seek a collection of algorithms that contains the best-performing algorithm with high confidence. Let us now formulate our objective using the general notation above. Given an alphabet $\mathcal{X}$ and a prescribed confidence level $1-\delta$ we seek a subset of symbols $\mathcal{I}_\delta(X^n) \subseteq \mathcal{X}$ such that 
\begin{align}\label{obj}
   \mathbb{P}( s\in \mathcal{I}_\delta(X^n))\geq 1-\delta.
\end{align}
Naturally, we may define the \textit{coverage set} $\mathcal{I}_\delta(X^n)$ as a collection of all the symbols $\mathcal{I}_\delta(X^n)=\mathcal{X}$ and the above holds. However, this solution is not interesting. Therefore, we seek a minimal size $\mathcal{I}_\delta(X^n)$, such that  $\mathbb{E}|\mathcal{I}_\delta(X^n)|$ is as small as possible. A natural choice for the desired objective is to define $\mathcal{I}_\delta(X^n)$ as the collection of the most frequent symbols in the sample   \citep{gupta1967selection,berger1980minimax}. That is, 
\begin{align}\label{CI}
\mathcal{I}_\delta(X^n)=\{u\;|\;\hat{p}_u\geq \hat{p}_{[1]}-D_\delta(X^n)\}.
\end{align} 
For example, $D_\delta(X^n)=0$ corresponds to choosing only the most frequent symbol, while $D_\delta(X^n)\geq \hat{p}_{[1]}$ corresponds to $\mathcal{I}_\delta(X^n)=\mathcal{X}$. Hence, our goal is to find $D_\delta(X^n)$ with a minimal expected length, $\mathbb{E}|D_\delta(X^n)|$,   such that (\ref{obj}) holds. 

A \textit{correct coverage} is defined as the event $\{s\in \mathcal{I}_\delta(X^n)\}$. That is, we successfully cover the most probable symbol $s$. This corresponds to $\hat{p}_s\geq \hat{p}_{[1]}-D_\delta(X^n)$ under the coverage set defined in (\ref{CI}). Therefore, our goal boils down to finding a minimal expected length $D_\delta(X^n)$ such that 
\begin{align}\label{CI2}
   \mathbb{P}( \hat{p}_{[1]}-\hat{p}_{s} \leq D_\delta(X^n))\geq 1-\delta.
\end{align}
We say that our inference scheme is \textit{exact} if (\ref{CI2}) holds with equality. This problem is widely known as subset selection over multinomial distributions.  

Perhaps the first major contribution to the problem is due to \cite{gupta1967selection}. In their work, \cite{gupta1967selection} introduced an exact solution to (\ref{CI2}) for a data-independent $D_\delta(X^n)=D_\delta$. In particular, they studied the worst-case distribution $p^*$ and derived a corresponding $D_\delta$ such that (\ref{CI2}) holds with equality. They characterized $p^*$ and provided a numerical procedure for $D_\delta$ that depends on $n$ and $A$. Unfortunately, their  procedure is computationally involved and may be applied to relatively small $n$ and $A$. \cite{berger1980minimax} extended the work of \cite{gupta1967selection}, showing that their procedure is minimax optimal. Further, he proposed  an alternative, simpler rule which performs a binomial test on each population, but its power does not necessarily increase as the number of observations increases \citep{hung2019rank}.
Later, \cite{gupta1989selecting} considered an empirical Bayes approach for selecting the best binomial population 
where a parametric prior distribution is assumed for the success probabilities for the different populations. \cite{ng2007selected} introduced an exact test for a modified problem in which the maximum count is fixed instead of
the total count $n$; that is, they sample until the leading symbol has at least $r$ counts, $\hat{p}_{[1]}\geq r$. Additional notable contributions are due to \cite{bose2001multinomial}, \cite{panchapakesan2006note} and \cite{bakir2013subset}. Among reference books on selection procedures we mention \cite{balakrishnan2007advances}, \cite{gibbons1999selecting}  and \cite{gupta2002multiple}. Yet to this day, \cite{gupta1967selection} remain the state of the art in the small sample -- small alphabet regimes \citep{hung2019rank}, while the complementary (large $n$, large $A$) regime remains an open challenge.   

It is important to mention that rank verification may be considered as a special case of the subset selection problem, as indicated by \cite{hung2019rank}. In particular, we can define a test that declares $v$ as a winner if $\mathcal{I}_\delta(X^n) = \{v\}$. Setting $\mathcal{I}_\delta(X^n)$ to satisfy (\ref{obj}), this test is valid at level $1-\delta$. Notice that the other direction does not apply. That is, subset selection can not be reformulated as a special case of rank verification, in the general setting.

\section{Main Results}
We now introduce our major contributions to the subset selection problem. We discuss its application to our inference problem later in Section \ref{experiments}. As mentioned above, our goal is to construct a minimal length CI, $D_\delta(X^n)$, such that $\hat{p}_{[1]}-\hat{p}_s \leq D_\delta(X^n)$ with high confidence. We distinguish between two regimes. First, we consider the asymptotic regime where $n$ is sufficiently large. Then, we study the finite sample regime where we make no assumptions on $n$. Finally, we provide a lower bound for the desired CI and demonstrate the tightness of our results.

\subsection{Asymptotic Regime}\label{asym regime section}
We begin our analysis with the following simple observation, 
 $$\hat{p}_{[1]}-\hat{p}_{s}=(\hat{p}_{[1]}-p_{s})+(p_{s}-\hat{p}_{s}).$$
 Therefore, it is enough to construct two simultaneous one-sided CIs to attain the desired coverage. That is, we seek minimal expected length CIs $U_{\delta/2}(X^n)$ and $V_{\delta/2}(X^n)$ such that 
\begin{align}\label{asym_cond}
    \mathbb{P}( \hat{p}_{[1]}-p_{[1]} \leq U_{\delta/2}(X^n))\geq 1-{\delta/2}\quad\quad\text{and}\quad\quad \mathbb{P}( {p}_{s}-\hat{p}_{s} \leq V_{\delta/2}(X^n))\geq 1-{\delta/2}   
\end{align}
to attain the desired $D_\delta(X^n)=U_{\delta/2}(X^n)+V_{\delta/2}(X^n)$. Fortunately, both $U_{\delta/2}$ and $V_{\delta/2}$ are straightforward to obtain. Let us begin with $V_{\delta/2}$. This is a classical one-sided binomial CI in the asymptotic regime. Applying Wald's CI \citep{brown2001interval} yields 
$$ V_{\delta/2}(X^n)=z_{\delta/2}\sqrt{\frac{p_s(1-p_s)}{n}}$$
where  $z_\delta/2$ is the upper $\delta/2$ quantile of the standard normal distribution. This results is known to be tight for $p_s$ that is not too close to $0$ or $1$ \citep{brown2001interval}. Next, we focus on $U_{\delta/2}$. \cite{xiong2009inference} show that $p_{[1]}-\hat{p}_{[1]} \stackrel{\cdot}{\sim} \mathcal{N}(0,\sqrt{p_{[1]}(1-p_{[1]})/n})$ for a unique maximum $p_{[1]}=\max_{u\in\mathcal{X}} p_u$. This again implies that  
$$ U_{\delta/2}(X^n)=z_{\delta/2}\sqrt{\frac{p_{[1]}(1-p_{[1]})}{n}}=V_{\delta/2}(X^n).$$ Putting it together, we obtain the following asymptotic result.

\begin{theorem}\label{asym regime}
Let $p$ be a distribution over $\mathcal{X}$. Let $X^n$ be a sample of $n$ independent observations from $p$. Let $\hat{p}$ be the MLE of $p$. Let $\mathcal{I}_\delta(X^n)=\{u\;|\;\hat{p}_u\geq \hat{p}_{[1]}-D_\delta(X^n)\}$ and 
\begin{align}
    D_\delta(X^n)=2z_{\delta/2}\sqrt{\frac{p_{[1]}(1-p_{[1]})}{n}}.
\end{align}
Then,  $s\in \mathcal{I}_\delta (X^n)$  with probability of at least $1-\delta$, for sufficiently large $n$.
\end{theorem}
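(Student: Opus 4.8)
The plan is to combine the two one-sided asymptotic confidence bounds already derived in the text via a union bound. First I would write the decomposition $\hat p_{[1]} - \hat p_s = (\hat p_{[1]} - p_{[1]}) + (p_s - \hat p_s)$, noting that $p_s = p_{[1]}$ since $s$ is the (unique) argmax of $p$. Next I would invoke the two distributional facts cited: the Wald CLT for the binomial count $N_s(X^n)$, which gives $p_s - \hat p_s \stackrel{\cdot}{\sim}\mathcal N(0, p_s(1-p_s)/n)$ and hence $\mathbb P(p_s - \hat p_s \le z_{\delta/2}\sqrt{p_s(1-p_s)/n}) \to 1-\delta/2$; and the result of \cite{xiong2009inference} that for a unique maximum $p_{[1]}-\hat p_{[1]} \stackrel{\cdot}{\sim}\mathcal N(0, p_{[1]}(1-p_{[1]})/n)$, giving $\mathbb P(\hat p_{[1]} - p_{[1]} \le z_{\delta/2}\sqrt{p_{[1]}(1-p_{[1]})/n}) \to 1-\delta/2$. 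Since $p_s = p_{[1]}$, both half-widths equal $z_{\delta/2}\sqrt{p_{[1]}(1-p_{[1]})/n}$, so their sum is exactly $D_\delta(X^n)$.

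Then I would apply the union bound: if both events
\[
A_1 = \{\hat p_{[1]} - p_{[1]} \le z_{\delta/2}\sqrt{p_{[1]}(1-p_{[1]})/n}\}, \qquad
A_2 = \{p_s - \hat p_s \le z_{\delta/2}\sqrt{p_{[1]}(1-p_{[1]})/n}\}
\]
hold, then adding the two inequalities yields $\hat p_{[1]} - \hat p_s \le D_\delta(X^n)$, which is precisely the correct-coverage event $s \in \mathcal I_\delta(X^n)$ by the definition in (\ref{CI}) and (\ref{CI2}). By the two asymptotic statements, $\mathbb P(A_1^c) \le \delta/2 + o(1)$ and $\mathbb P(A_2^c) \le \delta/2 + o(1)$, so $\mathbb P(s \notin \mathcal I_\delta(X^n)) \le \mathbb P(A_1^c) + \mathbb P(A_2^c) \le \delta + o(1)$, giving the claim for sufficiently large $n$.

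The one subtlety worth spelling out — and the part I expect to require the most care — is the role of the uniqueness of the maximum of $p$. The \cite{xiong2009inference} asymptotic normality for $\hat p_{[1]}$ holds only when $p_{[1]}$ is uniquely attained; otherwise $\hat p_{[1]}$ is a maximum of several asymptotically tied components and its limiting law is a max of correlated Gaussians, not a single Gaussian. Under the standing assumption that $s$ is unique this is not an issue, but I would state it explicitly, since it is exactly what lets us identify the event $\{\hat p_{[1]} - p_{[1]} \le \cdot\}$ with a one-sided Gaussian tail and, separately, pin down $t = s$ with probability tending to one. A minor additional remark is that the bound $D_\delta$ depends on the unknown $p_{[1]}$; in the asymptotic regime this is harmless because one may substitute the consistent estimate $\hat p_{[1]}$ by Slutsky's theorem without changing the limiting coverage, though the theorem as stated is in terms of the population quantity and needs no such substitution.
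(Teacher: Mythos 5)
Your proposal is correct and follows essentially the same route as the paper: the decomposition $\hat p_{[1]}-\hat p_s=(\hat p_{[1]}-p_{[1]})+(p_s-\hat p_s)$, Wald's one-sided normal bound for $p_s-\hat p_s$, the asymptotic normality of $\hat p_{[1]}-p_{[1]}$ under a unique maximum, and a Bonferroni combination at level $\delta/2$ each. Your added remarks on the role of uniqueness and on substituting $\hat p_{[1]}$ for the unknown $p_{[1]}$ coincide with the caveats the paper itself states around the theorem.
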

In practice, we do not know $p_{[1]}$ so we replace it with its empirical counterpart $\hat{p}_{[1]}$ (following Wald's CI, for example). Importantly, it can be shown that $z_{\delta/2}$ behaves asymptotically like $\sqrt{2\log(2/\delta)}$ so our proposed CI is approximately 
\begin{align}
 D_\delta(X^n)=2\sqrt{2\log\left(\frac{2}{\delta}\right)\frac{\hat{p}_{[1]}(1-\hat{p}_{[1]})}{n}}
\end{align}
for sufficiently large $n$. 

\subsection{Finite Sample Regime}\label{finite regime section}
We now drop the large sample assumption and introduce a refined approach. As above, we first observe that
\begin{align}\label{basic finite}
  \hat{p}_{[1]}-\hat{p}_{s}=(\hat{p}_{[1]}-p_{t})+(p_{t}-p_{s})+(p_{s}-\hat{p}_{s})\leq (\hat{p}_{[1]}-p_{t})+(p_{s}-\hat{p}_{s})  
\end{align}
where the inequality is due to $p_{t}-p_{s}\leq 0$, following the definition of the symbol $s$. Therefore, it is enough to construct simultaneous one-sided CIs for $p_s$ (around $\hat{p}_s$) and $p_{t}$ (around $\hat{p}_{[1]}=\hat{p}_{t}$) to obtain the desired coverage. Formally, we seek minimal expected length CIs $W_{\delta}(X^n)$ and $V_{\delta}(X^n)$ such that 
\begin{align}\label{finite_condition}
    \mathbb{P}( \hat{p}_{t}-p_{t} \leq W_{\delta}(X^n))\geq 1-{\delta}\quad\quad\text{and}\quad\quad \mathbb{P}( {p}_{s}-\hat{p}_{s} \leq V_{\delta}(X^n))\geq 1-{\delta}
\end{align}
simultaneously hold, to obtain the desired $D_\delta(X^n)=W_{\delta}(X^n)+V_{\delta}(X^n)$. A possible approach is to apply a Bonferonni correction (union bound) and control both terms in a confidence level $1-\delta/2$, similarly to (\ref{asym_cond}). However, here we take a more refined approach. Notice that both $W_{\delta}(X^n)$ and $V_{\delta}(X^n)$ are CIs for multinomial parameters, $p_s$ and $p_t$, around their empirical counterparts, $\hat{p}_s$ and $\hat{p}_t$, respectively. Therefore, it is enough to consider the worst-case parameter, $\sup_{u\in \mathcal{X}}|p_u-\hat{p}_u|$ to control both cases simultaneously. Formally, assume we found $T_{\delta}(X^n)$ such that 
\begin{align}\label{infty norm}
\mathbb{P}(\sup_{u\in \mathcal{X}}|p_u-\hat{p}_u|\leq T_{\delta}(X^n))\geq 1-\delta.
\end{align}
Then, (\ref{finite_condition}) holds simultaneously by setting of $W_{\delta}(X^n)=U_{\delta}(X^n)=T_{\delta}(X^n)$. We show in the following that $T_\delta(X^n)$ mostly depends on $p$ through the maximal probability, $p_s$. In other words, the worst-case symbol in (\ref{infty norm}) is typically the most probable symbol $s$. This suggests that relaxing (\ref{finite_condition}) by controlling (\ref{infty norm}) is typically tight. We formalize this intuition later in Section \ref{lower_bound}. 

Let us now focus on (\ref{infty norm}). Notice that $\sup_{u\in \mathcal{X}}|p_u-\hat{p}_u|=||p-\hat{p}||_\infty$. Hence, our objective is to bound from above the MLE's convergence rate, with respect to the infinity norm. The infinity norm, also known as the 
\textit{uniform} or \textit{supremum} norm,
is a popular metric over distributions. It holds a number of important applications, in addition to being a fundamental object of independent interest \citep{boucheron2003concentration,van2014probability}.  Recently, \cite{kontorovich2024distribution} studied this exact problem. They introduced two main results for $T_\delta(X^n)$ which improve upon previously known alternatives. Their first result provides a competitive $T_\delta(X^n)$  with desirable properties. Unfortunately, its dependence in sample $X^n$ is quite involved and unintuitive (see Theorem $2$ of \cite{kontorovich2024distribution}). Their second result introduces a more direct dependency in the sample but is less competitive in practice (see Theorem $4$ of \cite{kontorovich2024distribution}). Specifically, they showed that with probability $1-\delta$
\begin{align}\label{aryeh}
      ||{p-\hat p}||_\infty \lesssim
\sqrt{\frac{\hat{p}_{[1]}(1-\hat{p}_{[1]})\log n}{n}+\frac{ \hat{p}_{[1]}(1-\hat{p}_{[1]})}{n}\log\frac1\delta}+\frac{1}{n}\log\frac{n}\delta+\frac{\log n}n
\end{align}
where $\lesssim$ hides small
absolute constants. Notice that the dependency in $n$ is  $O(\sqrt{\log{n}/n})$, as opposed to the desired $O(1/\sqrt{n})$ (as in Theorem \ref{asym regime}). We now revisit the infinity norm problem (\ref{infty norm}). We introduce a refined analysis that provides a typically smaller $T_\delta(X^n)$, and that approaches the desired asymptotic dependency (\ref{asym regime}). 

Before we proceed, we introduce some additional notation. Let $Y\sim\text{Bin}(n,\theta)$ be a Binomial random variable. 
\cite{skorski2020handy} showed that the $m^{th}$ central moment of $Y$ is a symmetric polynomial function of $\theta(1-\theta)$ when $m$ is even and antisymmetric when $m$ is odd. That is,
\begin{align}\label{moments}
 \mathbb{E}(Y-n\theta)^m=
    \begin{cases}
      \sum_{k=1}^{m/2} c_{k,m,n} \theta^k(1-\theta)^k & \text{m is even}\\
      \sum_{k=1}^{\lf m/2 \rf} c_{k,m,n}(1-2\theta)\theta^k(1-\theta)^k &  \text{m is odd}
    \end{cases}    
\end{align}
where $c_{k,m,n}$ are coefficients that depend on $k,n$ and $m$. Further, it can be shown that $c_{k,m,n}\leq k^{m-k}n^k$  for even $m$ (Theorem $4$ of \cite{skorski2020handy}). For example, the first six central moments of $Y$ are provided in Table \ref{central moments}. A longer list with higher moments appears in Table $2$ of \cite{skorski2020handy}. 

\begin{table}[h]
\begin{center}
\caption{First Six Central Moments of a 
Binomial Distribution}
\vspace{0.1cm}
\label{central moments}
\begin{tabular}{ |c||p{10cm}| }
 \hline
 $m$& \multicolumn{1}{|c|}{$\mathbb{E}(Y-n\theta)^m$}  \\
 \hline
 $1$   & 0   \\
 \hline
 $2$&   $n\theta(1-\theta)$  \\
 \hline
 $3$&   $n(1-2\theta)\theta(1-\theta)$  \\
 \hline
 $4$&   $3n(n-2)\theta^2(1-\theta)^2+n\theta(1-\theta)$
  \\
  \hline
  $5$&   $(1-2\theta)(10n(n-2)\theta^2(1-\theta)^2+n\theta(1-\theta))$
  \\
  \hline
  $6$&   $15(n^2-5n+10)\theta^3(1-\theta)^3+(15n-30)\theta^2(1-\theta)^2+n\theta(1-\theta)$
  \\
 \hline
\end{tabular}
\end{center}
\end{table}

Theorem \ref{T_data_independent} below introduces an upper bound to the infinity norm, which depends on the central moments of the Binomial distribution and holds for every even $m$. Its detailed proof is provided in Section \ref{A1} below. 
\begin{theorem}\label{T_data_independent}
     Let $p$ be a distribution over $\mathcal{X}$. Let $X^n$ be a sample of $n$ independent observations from $p$. Let $\hat{p}$ be the MLE of $p$.
      Then, with probability of at least $1-\delta$,
     \begin{align}\label{Data Independent T1}
            &\nrm{p-\hat p}_\infty\leq\frac{1}{n} \left(\frac{1}{\delta^{1/m}}\right) \left(\sum_{k=1}^{m/2} c_{k,m,n} \sum_{u\in\mathcal{X}}p_u^k(1-p_u)^k\right)^{1/m}
        \end{align}
        for every even $m$, where $c_{k,m,n}$ are the coefficients of the $m^{th}$ central moment of a binomial distribution (\ref{moments}).
\end{theorem}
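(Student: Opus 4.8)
The plan is to control $\nrm{p-\hat p}_\infty = \sup_{u\in\mathcal{X}} |p_u - \hat p_u|$ via a union bound over symbols combined with a high-moment Markov (i.e.\ Chebyshev-type) inequality applied to each coordinate. Fix an even integer $m$. For each symbol $u$, the count $N_u(X^n)$ is distributed as $\Bin(n,p_u)$, so $n(\hat p_u - p_u) = N_u(X^n) - np_u$ is a centered binomial and, since $m$ is even, $\E\,(N_u - np_u)^m = \sum_{k=1}^{m/2} c_{k,m,n}\, p_u^k(1-p_u)^k$ by the Skorski formula (\ref{moments}). By Markov's inequality applied to the nonnegative random variable $(N_u - np_u)^m$,
\begin{align*}
\P\!\left(|\hat p_u - p_u| > \lambda\right) = \P\!\left((N_u - np_u)^m > (n\lambda)^m\right) \le \frac{\E\,(N_u - np_u)^m}{(n\lambda)^m} = \frac{1}{(n\lambda)^m}\sum_{k=1}^{m/2} c_{k,m,n}\, p_u^k(1-p_u)^k.
\end{align*}

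Next I would take a union bound over all $u\in\mathcal{X}$:
\begin{align*}
\P\!\left(\nrm{p-\hat p}_\infty > \lambda\right) \le \sum_{u\in\mathcal{X}} \P\!\left(|\hat p_u - p_u| > \lambda\right) \le \frac{1}{(n\lambda)^m}\sum_{k=1}^{m/2} c_{k,m,n} \sum_{u\in\mathcal{X}} p_u^k(1-p_u)^k.
\end{align*}
Now set the right-hand side equal to $\delta$ and solve for $\lambda$. This gives exactly
\begin{align*}
\lambda = \frac{1}{n}\left(\frac{1}{\delta}\right)^{1/m}\left(\sum_{k=1}^{m/2} c_{k,m,n} \sum_{u\in\mathcal{X}} p_u^k(1-p_u)^k\right)^{1/m},
\end{align*}
and for this choice of $\lambda$ we have $\P(\nrm{p-\hat p}_\infty \le \lambda) \ge 1-\delta$, which is precisely (\ref{Data Independent T1}). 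Since $m$ was an arbitrary even integer, the bound holds for every even $m$, and in applications one optimizes over $m$.

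The argument is short and essentially mechanical; the only subtlety worth flagging is the interchange of "union bound then Markov" versus any attempt at a tighter joint argument — here we deliberately accept the looseness of the union bound, which is justified later in Section \ref{lower_bound} by the observation that the worst-case coordinate is typically the most probable symbol $s$, so the sum $\sum_u p_u^k(1-p_u)^k$ is dominated by its largest term and the bound is near-tight. A second minor point is that $(N_u - np_u)^m \ge 0$ exactly because $m$ is even, which is what licenses the plain Markov inequality without absolute values; this is also why the statement is restricted to even $m$ (for odd $m$ the central moment is antisymmetric and can be negative, so one would instead bound $\E|N_u - np_u|^m$, losing the clean closed form). No real obstacle is anticipated — the main "work" is simply recalling the Skorski moment identity and being careful that the coefficients $c_{k,m,n}$ and the per-symbol sum appear in the right places.
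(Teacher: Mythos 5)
Your proof is correct and is essentially the paper's own argument: both rest on the Skorski central-moment identity for $\Bin(n,p_u)$ and an $m$-th-moment Markov inequality, with the sum over $\mathcal{X}$ supplying the uniformity over symbols. The only (cosmetic) difference is the order of operations — the paper applies Markov once to $\bigl(\sup_u|p_u-\hat p_u|\bigr)^m$ and then bounds the supremum of the $m$-th powers by their sum inside the expectation, whereas you apply Markov per coordinate and then union-bound; the two manipulations yield the identical final expression.
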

Theorem \ref{T_data_independent} relies on Markov's inequality and the central moments of the infinity norm. It is a refined application of Theorem $1$ from  \cite{kontorovich2024distribution}. Importantly, (\ref{Data Independent T1}) introduces significantly tighter coefficients than of \cite{kontorovich2024distribution}, mostly for smaller values of $n$ and $m$. 

Next, we present Theorem \ref{main_result_1} which provides a data-dependent bound and replaces $p$ with its empirical counterpart $\hat{p}$. Its proof is located in Section \ref{proof2}.

\begin{theorem}\label{main_result_1}
    Let $\delta_1>0$ and $\delta_2>0$. Let $m$ be a positive even number. Then, with probability at least $1-\delta_1-\delta_2$,
    \begin{align}\label{11}
&\nrm{p-\hat p}_\infty\leq R_{\delta_1,\delta_2,m}(X^n)=\frac{1}{n}\sqrt{\frac{n}{n-1}}
\left(\frac{1}{\delta_1}\left( \sum_{u \in\mathcal{X}}\sum_{k=1}^{m/2}c_{k,m,n}(\hat{p}_u(1-\hat{p}_u))^k+\epsilon_{n}\right)\right)^{1/m}
\end{align}
for every even $m$, where
\begin{align}\nonumber
\epsilon_n=\sqrt{\frac{2}{n}\log\left(\frac{1}{\delta_2}\right)}\left(\sup_{p \in [0,1]}\sum_{k=1}^d c_{k,m,n}k\left(p(1-p)\right)^{k-1}(1-2p)+\sum_{k=1}^d\frac{c_{k,m,n}k(k-1)}{n\cdot 2^{2k-3}}\right).
\end{align}
\end{theorem}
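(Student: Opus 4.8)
The plan is to bootstrap Theorem~\ref{T_data_independent} into a data-dependent statement, spending $\delta_1$ on the Markov step already contained in Theorem~\ref{T_data_independent} and $\delta_2$ on passing from the unknown $p$ to the observed $\hat p$. Write $g(\theta)=\sum_{k=1}^{m/2}c_{k,m,n}\bigl(\theta(1-\theta)\bigr)^k$, so that by~(\ref{moments}) $g(\theta)=\E(Y-n\theta)^m$ for $Y\sim\mathrm{Bin}(n,\theta)$, and set $G(p)=\sum_{u\in\mathcal{X}}g(p_u)$ and $\hat G=\sum_{u\in\mathcal{X}}g(\hat p_u)$. Theorem~\ref{T_data_independent} states that $\nrm{p-\hat p}_\infty\le\frac1n\,\delta_1^{-1/m}\,G(p)^{1/m}$ with probability at least $1-\delta_1$, so it is enough to show that, on an event of probability at least $1-\delta_2$, $G(p)\le\bigl(\tfrac{n}{n-1}\bigr)^{m/2}(\hat G+\epsilon_n)$; taking $m$-th roots then converts the constant into the stated $\sqrt{n/(n-1)}$.

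\emph{Step 1 (debiasing: $G(p)$ versus $\E\hat G$).} I would use the elementary identity $\E[\hat p_u(1-\hat p_u)]=\tfrac{n-1}{n}p_u(1-p_u)$ together with the convexity of $x\mapsto x^k$ on $[0,\infty)$ (Jensen), which give $\E[(\hat p_u(1-\hat p_u))^k]\ge\bigl(\tfrac{n-1}{n}\bigr)^k(p_u(1-p_u))^k$, equivalently $(p_u(1-p_u))^k\le\bigl(\tfrac{n}{n-1}\bigr)^k\E[(\hat p_u(1-\hat p_u))^k]$. Since $c_{k,m,n}\ge0$ for $n\ge2$ (read off Table~\ref{central moments}, or Theorem~4 of \cite{skorski2020handy}) and $1\le k\le m/2$, bounding $\bigl(\tfrac{n}{n-1}\bigr)^k\le\bigl(\tfrac{n}{n-1}\bigr)^{m/2}$ and summing over $k$ and $u$ yields $G(p)\le\bigl(\tfrac{n}{n-1}\bigr)^{m/2}\E\hat G$. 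This single loose step is the source of the $\sqrt{n/(n-1)}$ prefactor.

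\emph{Step 2 (concentration: $\E\hat G$ versus $\hat G$).} Here $\hat G$ is viewed as a function of the i.i.d.\ sample $X^n$: replacing one observation moves two coordinates of $\hat p$ by $\pm1/n$, and a second-order Taylor expansion of $g$ around each shows that $\hat G$ changes by at most $c_n=\tfrac2n\sup_{p\in[0,1]}|g'(p)|+\tfrac1{n^2}\sup_{p\in[0,1]}|g''(p)|$. Using that $g$ is symmetric and unimodal about $\tfrac12$ (so $\sup|g'|=\sup_p g'(p)=\sup_p\sum_k c_{k,m,n}k(p(1-p))^{k-1}(1-2p)$) and bounding $g''$ termwise via $\sup_p(p(1-p))^{k-2}(1-2p)^2\le 4^{-(k-2)}$ reproduces, after inserting $c_n$ into McDiarmid's inequality ($\epsilon_n=c_n\sqrt{\tfrac n2\log(1/\delta_2)}$), exactly the two sums in $\epsilon_n$; the inequality $\E\hat G-\hat G\le\epsilon_n$ then holds with probability at least $1-\delta_2$.

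\emph{Step 3 (assembly).} Intersecting the event of Theorem~\ref{T_data_independent} with the McDiarmid event (union bound, failure probability $\le\delta_1+\delta_2$) and chaining Steps~1--2,
\begin{align}\nonumber
\nrm{p-\hat p}_\infty\le\frac1n\,\delta_1^{-1/m}G(p)^{1/m}\le\frac1n\sqrt{\frac{n}{n-1}}\left(\frac{\hat G+\epsilon_n}{\delta_1}\right)^{1/m}=R_{\delta_1,\delta_2,m}(X^n),
\end{align}
since $\hat G=\sum_{u\in\mathcal{X}}\sum_{k=1}^{m/2}c_{k,m,n}(\hat p_u(1-\hat p_u))^k$. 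The main obstacle is Step~2: one must track the \emph{second-order} term of the bounded-differences constant (rather than settling for a crude $\tfrac2n\sup|g'|$), verify that the termwise maxima of $(p(1-p))^{k-1}(1-2p)$ and $(p(1-p))^{k-2}(1-2p)^2$ yield precisely the claimed $2^{-(2k-3)}$ weights, and be mildly careful with the $k=1$ contribution (where $g$ is quadratic and its Taylor remainder is handled exactly). Steps~1 and~3 are then routine.
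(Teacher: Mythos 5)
Your proposal is correct and follows essentially the same route as the paper: Theorem~\ref{T_data_independent} at level $\delta_1$, the Jensen/debiasing step $G(p)\le(\tfrac{n}{n-1})^{m/2}\E\hat G$, McDiarmid with a second-order Taylor bound on the bounded-differences constant at level $\delta_2$, and a union bound, with the $m$-th root turning $(\tfrac{n}{n-1})^{m/2}$ into $\sqrt{n/(n-1)}$ exactly as in Proposition~\ref{P2_tight}. The constants you recover (the $\tfrac2n\sup|g'|$ term and the $4^{-(k-2)}/n^2$ term, rescaled by $\sqrt{\tfrac n2\log(1/\delta_2)}$ into $2^{-(2k-3)}/n$) match the paper's $\phi_{n,d}$ and $\epsilon_n$.
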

To prove Theorem \ref{main_result_1} we utilize Theorem  (\ref{Data Independent T1}) with $\delta=\delta_1$. Then, we apply McDiarmid's inequality to obtain a concentration bound for $\sum_{k=1}^{m/2} c_{k,m,n} \sum_{u\in\mathcal{X}} p_u^k(1-p_u)^k$ around its empirical counterpart, with probability $1-\delta_2$. Finally, we apply the union bound to obtain the stated result. To further clarify the proposed bound we introduce the following simplified corollary, whose proof is located in Section \ref{proof3}.

\begin{corollary}\label{T2_data_dependent}
The upper bound in Theorem \ref{main_result_1} satisfies
     \begin{align}\label{finite}
    &R_{\delta_1,\delta_2,m}(X^n)\leq\frac{1}{n}\left(\frac{c_{d,m,n}}{\delta_1}\right)^{1/m}\left( \sum_{u\in\mathcal{X}} (\hat{p}_u(1-\hat{p}_u))^{m/2}\right)^{1/m}+O\left(\frac{1}{n^{0.5(1+1/m)}}\right).
\end{align}
Furthermore, for a choice of $m^* = 2\log(1/\delta_1)$ we have  
     \begin{align}\nonumber
    &R_{\delta_1,\delta_2,m^*}(X^n)\leq\sqrt{\frac{\exp(1)\log(1/\delta_1)}{n}}\left( \sum_{u\in\mathcal{X}} (\hat{p}_u(1-\hat{p}_u))^{m^*/2}\right)^{1/m^*}+O\left(\frac{1}{n^{0.5(1+1/m^*)}}\right).
\end{align}
\end{corollary}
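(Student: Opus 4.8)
The plan is to start from the exact expression for $R_{\delta_1,\delta_2,m}(X^n)$ in Theorem~\ref{main_result_1} and simplify the two factors separately: the combinatorial prefactor $\frac{1}{n}\sqrt{\frac{n}{n-1}}\left(\frac{1}{\delta_1}(\cdots)\right)^{1/m}$ and the correction term $\epsilon_n$ sitting inside the sum. First I would isolate the leading contribution to the inner sum $\sum_{u}\sum_{k=1}^{m/2}c_{k,m,n}(\hat p_u(1-\hat p_u))^k$. Here $d = m/2$ is the top index, and the dominant term is $k=d$, namely $c_{d,m,n}\sum_u(\hat p_u(1-\hat p_u))^{m/2}$, because $c_{k,m,n}\le k^{m-k}n^k$ gives $c_{d,m,n}\asymp n^{m/2}$ (up to a factor polynomial in $m$), whereas the $k<d$ terms carry strictly lower powers of $n$ and, after taking the $m$-th root and multiplying by $1/n$, contribute only at order $n^{-1/2-1/(2m)}$ or smaller. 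The same scaling argument absorbs $\epsilon_n$: one checks $\epsilon_n = O(n^{m/2 - 1/2})$ by noting the $\sup$ over $p\in[0,1]$ of $\sum_k c_{k,m,n}k(p(1-p))^{k-1}(1-2p)$ is again governed by the $k=d$ term and is $O(n^{m/2})\cdot O(n^{-1/2})$ from the $\sqrt{(2/n)\log(1/\delta_2)}$ factor, with the second bracketed sum lower order still; raising $\epsilon_n^{1/m}$ and multiplying by $1/n$ lands in the stated $O(n^{-0.5(1+1/m)})$ remainder. Finally $\sqrt{n/(n-1)} = 1 + O(1/n)$ is harmless. Collecting these, the main term is $\frac{1}{n}(c_{d,m,n}/\delta_1)^{1/m}\big(\sum_u(\hat p_u(1-\hat p_u))^{m/2}\big)^{1/m}$, which is the first displayed inequality; I should be a little careful that the use of subadditivity $(a+b)^{1/m}\le a^{1/m}+b^{1/m}$ (valid since $1/m\le 1$) is what cleanly splits the leading term from the remainder.

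For the second claim I would plug in $m^* = 2\log(1/\delta_1)$, so $d = m^*/2 = \log(1/\delta_1)$, and evaluate the constant $(c_{d,m^*,n}/\delta_1)^{1/m^*}/n$. Using the bound $c_{d,m,n}\le d^{m-d}n^d$ with $d = m/2$ gives $c_{d,m^*,n}\le (m^*/2)^{m^*/2}n^{m^*/2}$, hence $\frac{1}{n}(c_{d,m^*,n})^{1/m^*} \le \frac{1}{n}(m^*/2)^{1/2}n^{1/2} = \sqrt{(m^*/2)/n} = \sqrt{\log(1/\delta_1)/n}$. The factor $(1/\delta_1)^{1/m^*}$ is exactly $\exp\!\big(\frac{\log(1/\delta_1)}{m^*}\big) = \exp(1/2) = \sqrt{e}$ by the choice $m^* = 2\log(1/\delta_1)$. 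Multiplying, the constant becomes $\sqrt{e\log(1/\delta_1)/n} = \sqrt{\exp(1)\log(1/\delta_1)/n}$, matching the statement, and the remainder term simply inherits the $O(n^{-0.5(1+1/m^*)})$ form from the first part with $m$ replaced by $m^*$.

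The main obstacle I anticipate is making the ``lower-order terms are genuinely lower order'' step rigorous in a way that is uniform enough to be quotable. The sum over $k$ from $1$ to $d$ has $d = m/2$ terms, and $d$ itself appears in the exponents (e.g. $k^{m-k}$), so one must verify that the ratio of the $k=d-1$ term to the $k=d$ term, after the $1/n$ and $m$-th root operations, is $O(n^{-1/2})$ times a quantity that does not blow up with $m$ faster than the main term does — in particular the factors like $d^{m-d}$ versus $(d-1)^{m-d+1}$ need to be compared carefully. A clean way around this is to bound the entire tail $\sum_{k=1}^{d-1}c_{k,m,n}(\hat p_u(1-\hat p_u))^k$ by $c\cdot n^{d-1}$ times something of order $1$ (using $\hat p_u(1-\hat p_u)\le 1/4$ and $c_{k,m,n}\le k^{m-k}n^k \le d^{m-1}n^{d-1}$ for $k\le d-1$), so that its $m$-th root over $n$ is $O(n^{-1/2-1/(2m)})$ up to an $m$-dependent constant which, for the corollary's purposes (where $m$ or $m^*$ is treated as fixed given $\delta_1$), is absorbed into the $O(\cdot)$. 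Likewise for $\epsilon_n$ I would not try to evaluate the $\sup$ exactly but only bound it crudely by $C_m\, n^{d}$, which suffices. Provided one is willing to let the implied constants in $O(\cdot)$ depend on $m$ (resp.\ on $\delta_1$ via $m^*$), the argument is then just bookkeeping.
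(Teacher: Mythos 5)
Your proposal is correct and follows essentially the same route as the paper's proof: isolate the $k=d$ term of the inner sum using $c_{k,m,n}=O(n^k)$, show $\epsilon_n=O(n^{d-1/2})$, absorb $\sqrt{n/(n-1)}=1+O(1/n)$, and then bound $(c_{d,m,n}/\delta_1)^{1/m}$ via $c_{d,m,n}\le (dn)^d$ to get the $\sqrt{\exp(1)\log(1/\delta_1)/n}$ constant. The only cosmetic difference is that the paper derives $m^*=2\log(1/\delta_1)$ by minimizing $(1/\delta_1)^{1/m}(nm/2)^{1/2}$ over $m$ whereas you plug it in directly and verify; your explicit attention to the subadditivity step and to the $m$-dependence of the implied constants is a sensible tightening of what the paper leaves implicit.
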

Notice that the choice of $m$ is an application of a  Chernoff-type inequality, where the bound in (\ref{finite}) is minimized with respect to the order of the central moment $m$. 

Importantly, we show that  
$\left( \sum_u (\hat{p}_u(1-\hat{p}_u))^{m/2}\right)^{1/m}$ approximates $\sup_u \sqrt{\hat{p}_u(1-\hat{p}_u)}=\sqrt{\hat{p}_{[1]}(1-\hat{p}_{[1]})}$ as $m$ increases (see Section \ref{labeled_appendix}). This allows the desired dependency in $\hat{p}$, as appears in Theorem \ref{asym regime}. Finally, notice that the obtained bound only depends on $\delta_1$ as the term that depends on $\delta_2$ is practically negligible. Theorem \ref{finite sample regime} below concludes our proposed scheme for the finite sample regime.

\begin{theorem}\label{finite sample regime}
Let $p$ be a distribution over $\mathcal{X}$. Let $X^n$ be a sample of $n$ independent observations from $p$. Let $\hat{p}$ be the MLE of $p$. Let $\delta_1+\delta_2=\delta$
and $T_\delta(X^n)=2R_{\delta_1,\delta_2,m}$. Define
$$\mathcal{I}_\delta(X^n)=\{u\;|\;\hat{p}_u\geq \hat{p}_{[1]}-T_\delta(X^n)\}.$$
Then,  $s\in \mathcal{I}_\delta (X^n)$  with probability of at least $1-\delta$, for every even $m>0$.
\end{theorem}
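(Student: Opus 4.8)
The plan is to assemble Theorem \ref{finite sample regime} directly from the chain of inequalities developed in Section \ref{finite regime section} together with the concentration bound of Theorem \ref{main_result_1}. First I would recall the decomposition (\ref{basic finite}), namely
\[
\hat{p}_{[1]}-\hat{p}_{s}=(\hat{p}_{[1]}-p_{t})+(p_{t}-p_{s})+(p_{s}-\hat{p}_{s})\leq (\hat{p}_{t}-p_{t})+(p_{s}-\hat{p}_{s}),
\]
where the inequality uses $p_t\leq p_s$ by definition of $s$ as the population maximizer. So a correct coverage event $\{\hat{p}_{[1]}-\hat{p}_s\leq T_\delta(X^n)\}$ follows whenever both one-sided deviations $\hat{p}_t-p_t$ and $p_s-\hat{p}_s$ are at most $T_\delta(X^n)/2 = R_{\delta_1,\delta_2,m}(X^n)$.

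Next I would invoke the key observation from (\ref{infty norm}): both $\hat{p}_t-p_t$ and $p_s-\hat{p}_s$ are dominated by $\nrm{p-\hat p}_\infty=\sup_{u\in\mathcal{X}}|p_u-\hat p_u|$, since $t,s\in\mathcal{X}$. Hence on the event $\{\nrm{p-\hat p}_\infty\leq R_{\delta_1,\delta_2,m}(X^n)\}$ we simultaneously have $\hat{p}_t-p_t\leq R_{\delta_1,\delta_2,m}(X^n)$ and $p_s-\hat{p}_s\leq R_{\delta_1,\delta_2,m}(X^n)$, and therefore $\hat{p}_{[1]}-\hat{p}_s\leq 2R_{\delta_1,\delta_2,m}(X^n)=T_\delta(X^n)$, which is exactly the condition $\hat{p}_s\geq\hat{p}_{[1]}-T_\delta(X^n)$ defining membership $s\in\mathcal{I}_\delta(X^n)$.

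To finish, I would apply Theorem \ref{main_result_1}: for any even $m>0$ and any $\delta_1,\delta_2>0$, the event $\{\nrm{p-\hat p}_\infty\leq R_{\delta_1,\delta_2,m}(X^n)\}$ has probability at least $1-\delta_1-\delta_2=1-\delta$. Combining this with the deterministic implication above gives $\mathbb{P}(s\in\mathcal{I}_\delta(X^n))\geq\mathbb{P}(\nrm{p-\hat p}_\infty\leq R_{\delta_1,\delta_2,m}(X^n))\geq 1-\delta$, as claimed, and this holds for every even $m>0$ because Theorem \ref{main_result_1} does. The argument is essentially a bookkeeping exercise once the decomposition and the sup-norm reduction are in place; there is no real obstacle, only the need to be careful that the union-bound budget is spent \emph{inside} $R_{\delta_1,\delta_2,m}$ (via $\delta_1$ and $\delta_2$) rather than an additional factor of two across the two one-sided intervals — indeed the whole point of the finite-sample refinement is that a single control of $\nrm{p-\hat p}_\infty$ suffices for both $\hat{p}_t-p_t$ and $p_s-\hat{p}_s$, so no further splitting of $\delta$ is needed. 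One should also note in passing that if $t$ or $s$ fails to be unique the same bound applies verbatim, since the sup-norm event does not reference uniqueness; this matches the remark that the results persist in Section \ref{multiple maxima}.
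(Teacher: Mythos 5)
Your proposal is correct and follows exactly the route the paper takes: the decomposition (\ref{basic finite}), the reduction of both one-sided deviations to the single sup-norm event (\ref{infty norm}), and an application of Theorem \ref{main_result_1} with budget $\delta_1+\delta_2=\delta$ to get the threshold $2R_{\delta_1,\delta_2,m}$. The paper presents this argument in Section \ref{finite regime section} rather than as a separate proof, and your write-up matches it step for step, including the observation that no extra Bonferroni split is needed beyond the one already inside $R_{\delta_1,\delta_2,m}$.
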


Let us compare Theorem  \ref{finite sample regime} with the 
asymptotic result in Theorem \ref{asym regime}. First, we 
observe that both results exhibit the same dependency in $n$ and $\delta$, for the recommended choice of $m$. Further, we notice that both schemes depend on the underlying distribution in (approximately) the same form, $\sqrt{p_{[1]}(1-p_{[1]})}$. This again emphasizes the tightness of our finite sample result.  

\section{A Lower Bound}
Finally, we introduce a lower bound for our purposed scheme. Theorem \ref{lower_bound} below, whose proof is located in Appendix \ref{lower bound proof}, shows that for every $n$ there exist distributions $p$ for which our proposed schemes are asymptotically tight. 

\begin{theorem}\label{lower_bound}
    Assume that $p_{[1]}-p_{[2]}=O(1/n)$. Suppose there exists $T$ such that 
    $\mathbb{P}( \hat{p}_{[1]}-\hat{p}_{s} \leq T_\delta)\geq 1-{\delta}.$
    Then,  
    $$T_\delta\geq T_{min}=z_\delta \sqrt{\frac{2\left(p_{[1]}(1-p_{[1]})-p_{[1]}^2\right)}{n}}+O\left(\frac{1}{n}\right)$$
    for sufficiently large $n$. Furthermore, assume  $p_s\leq1/3$. Then, 
    $$T_{min}\geq z_\delta \sqrt{\frac{p_{[1]}(1-p_{[1]})}{n}}+O\left(\frac{1}{n}\right).$$
    
\end{theorem}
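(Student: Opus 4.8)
The plan is to collapse the two-sided coverage requirement to a one-dimensional comparison against the runner-up symbol and then invoke the central limit theorem; the $p_s\le 1/3$ clause is then pure algebra. (Note that under the hypothesis $p_{[1]}-p_{[2]}=O(1/n)$ together with $p_{[1]}+p_{[2]}\le 1$ we have $2p_{[1]}\le 1+O(1/n)$, so the radicand defining $T_{min}$ is nonnegative up to $O(1/n)$.) For the reduction, let $r\neq s$ be a symbol with $p_r=p_{[2]}$. Since $\hat p_{[1]}=\max_u\hat p_u\ge\hat p_r$, we always have $\hat p_{[1]}-\hat p_s\ge\hat p_r-\hat p_s$, so any $T_\delta$ with $\mathbb{P}(\hat p_{[1]}-\hat p_s\le T_\delta)\ge 1-\delta$ also satisfies $\mathbb{P}(\hat p_r-\hat p_s\le T_\delta)\ge 1-\delta$, i.e.\ $\mathbb{P}(N_r-N_s\le nT_\delta)\ge 1-\delta$. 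Hence it suffices to lower bound the $(1-\delta)$-quantile of $(N_r-N_s)/n$.

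Write $N_r-N_s=\sum_{i=1}^n Z_i$ with $Z_i=\mathbbm{1}(X_i=r)-\mathbbm{1}(X_i=s)\in\{-1,0,1\}$ i.i.d. Then $\mathbb{E}Z_i=p_r-p_s=-(p_{[1]}-p_{[2]})=O(1/n)$, and since $\mathbbm{1}(X_i=r)\mathbbm{1}(X_i=s)=0$ we get $\mathbb{E}Z_i^2=p_r+p_s$, so $\sigma^2:=\mathrm{Var}(Z_i)=p_r+p_s-(p_r-p_s)^2=2p_{[1]}+O(1/n)$. By the Berry--Esseen theorem the c.d.f.\ of the standardized sum is within $O(1/\sqrt n)$ of $\mathcal{N}(0,1)$; pushing this through the locally Lipschitz normal quantile function (and noting that replacing $nT_\delta$ by $\lfloor nT_\delta\rfloor$ costs only $O(1)$ on the unnormalized scale), the $(1-\delta)$-quantile of $(N_r-N_s)/n$ equals $\mathbb{E}Z_i+z_\delta\sqrt{\sigma^2/n}+O(1/n)=z_\delta\sqrt{2p_{[1]}/n}+O(1/n)$. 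Combined with the reduction, $T_\delta\ge z_\delta\sqrt{2p_{[1]}/n}+O(1/n)$, which implies the stated $T_\delta\ge T_{min}$ because $2p_{[1]}\ge 2p_{[1]}(1-2p_{[1]})=2\bigl(p_{[1]}(1-p_{[1]})-p_{[1]}^2\bigr)$. For the refinement, $2\bigl(p_{[1]}(1-p_{[1]})-p_{[1]}^2\bigr)\ge p_{[1]}(1-p_{[1]})$ is equivalent to $p_{[1]}(1-p_{[1]})\ge 2p_{[1]}^2$, i.e.\ to $p_{[1]}\le 1/3$; since $p_s=p_{[1]}$, the hypothesis $p_s\le 1/3$ gives $T_{min}\ge z_\delta\sqrt{p_{[1]}(1-p_{[1]})/n}+O(1/n)$.

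The main obstacle is the quantile bookkeeping in the CLT step: a bare central limit theorem only yields quantile consistency (slack $o(1/\sqrt n)$), so the stated $O(1/n)$ remainder genuinely needs the $O(1/\sqrt n)$ Berry--Esseen rate, which after dividing by $n$ is $O(1/n^{3/2})\subseteq O(1/n)$; one must also check that the remaining error sources --- the $O(1/n)$ mean of $Z_i$, the $O(1/n)$ gap between $\sigma^2$ and $2p_{[1]}$, and the $O(1)$ integrality correction --- are all absorbed by $O(1/n)$ at the scale of $\hat p=N/n$. A secondary subtlety, relevant only if the bound is to hold against a data-dependent threshold $T_\delta(X^n)$, is that the reduction then constrains only $\mathbb{E}[T_\delta(X^n)]$; one would supplement it with a two-point (Le Cam) comparison between $p$ and the distribution that swaps $s$ and $r$ --- which are $o(1)$-close in total variation precisely because $p_{[1]}-p_{[2]}=O(1/n)$ --- but for a deterministic $T_\delta$ this is unnecessary.
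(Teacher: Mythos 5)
Your proof is correct and follows essentially the same route as the paper: reduce the event $\{\hat p_{[1]}-\hat p_s\le T_\delta\}$ to $\{\hat p_r-\hat p_s\le T_\delta\}$ for the runner-up $r$ with $p_r=p_{[2]}$, then lower-bound the $(1-\delta)$-quantile of $\hat p_r-\hat p_s$ via the normal approximation; your Berry--Esseen bookkeeping makes the $O(1/n)$ remainder rigorous where the paper simply treats the CDF as exactly Gaussian. One substantive difference: your variance $\operatorname{Var}(Z_i)=p_r+p_s-(p_r-p_s)^2=2p_{[1]}+O(1/n)$ is the correct one (the multinomial covariance $\operatorname{Cov}(\hat p_r,\hat p_s)=-p_rp_s/n$ enters $\operatorname{Var}(\hat p_r-\hat p_s)$ with a \emph{plus} sign), whereas the paper writes $p_r(1-p_r)+p_s(1-p_s)-2p_rp_s$, which evaluates to $2\bigl(p_{[1]}(1-p_{[1]})-p_{[1]}^2\bigr)$ and is exactly the radicand in the stated $T_{min}$; so your intermediate bound $z_\delta\sqrt{2p_{[1]}/n}$ is in fact stronger, and you correctly deduce the stated (weaker) $T_{min}$ from it. You also supply the $p_{[1]}\le 1/3$ algebra for the second claim, which the paper's proof omits entirely.
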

Theorem \ref{lower_bound} introduces a lower bound for challenging distributions where the second most probable symbol $p_{[2]}<p_{[1]}$ is close enough to $p_{[1]}$. This result is similar in spirit to \cite{gupta1967selection} who study the worst case distribution $p$ for (\ref{CI}) and discuss its minimax guarantees. Hence, we show that for every $n$, there exist worst-case distributions such that  there is no CI shorter than $T_{min}$ without violating the prescribed confidence level. Our proposed schemes approximately attain $T_{min}$ (up to constants), demonstrating their optimality. 

\section{The Multiple Maxima Case} \label{multiple maxima}
Throughout the manuscript we assume that both $\hat{p}_{[1]}$ and $p_{[1]}$ are unique. We now drop these assumptions and revisit our results. We begin with the simpler case where $p_{[1]}$ is still unique, but $\hat{p}_{[1]}$ is not. That is, there is only a single maximum to $p$, but the multiple symbols attain the maximal frequency in the sample, $\hat{p}_{[1]}$. Formally, we denote this collection of symbols as $\mathcal{T}=\{t|\;\hat{p}_t\geq \hat{p}_v\;\; \forall t,v  \in \mathcal{X}\}$. Notice that both our objective (\ref{obj}) and the desired coverage set (\ref{CI}) remain unchanged. That is, we seek minimal expected length $D_\delta(X^n)$ such that $\hat{p}_{[1]}-\hat{p}_s \leq D_\delta(X^n)$ with high confidence, regardless of $|\mathcal{T}|$. 
In the asymptotic regime (Section \ref{asym regime section}) we show that $\hat{p}_{[1]}-\hat{p}_s=(\hat{p}_{[1]}-p_s)+(p_s-\hat{p}_s)$ and derive two simultaneous one-sided CIs for both terms. The same derivation also applies to our current setup. That is, the asymptotically normal distribution of the first term still holds, even if $\hat{p}_{[1]}$ is not unique (see \cite{xiong2009inference}), while the second term remains unchanged. This means that Theorem \ref{asym regime} also holds in this case. Let us proceed to the finite sample regime. Once again we follow (\ref{basic finite}) to obtain 
$\hat{p}_{[1]}-\hat{p}_{s}\leq (\hat{p}_{[1]}-p_{t})+(p_{s}-\hat{p}_{s})$, where this time $t\in\mathcal{T}$ and $\hat{p}_{[1]}=\hat{p}_t$ is the corresponding (maximal) frequency of $t\in \mathcal{T}$ in the sample. Notice that by controlling $\sup_{u\in\mathcal{X}}|p_u-\hat{p}_u(X^n)|$ we control the absolute difference for every symbol in $\mathcal{X}$, including $t\in \mathcal{T}\subseteq\mathcal{X}$. This means that by constrution, Theorem \ref{finite sample regime} also holds for the case where $t$ is not unique.  

Let us now discuss the more challenging case where $s$ is not unique. Denote the set of most probable symbols with $\mathcal{S}$ and $p_{s}=p_{[1]}=\max_{u \in \mathcal{X}}p_u$ is again the maximal probability over $p$. In this case, our object becomes $\mathbb{P}( S\subseteq \mathcal{I}_\delta(X^n))\geq 1-\delta$ while the definition of $\mathcal{I}_\delta(X^n)$ remains unchanged (\ref{CI}). This means we seek minimal expected length $T_\delta(X^n)$ such that $\hat{p}_{[1]}-\hat{p}_s \leq D_\delta(X^n)$ for every $s\in\mathcal{S}$ with high confidence. Formally, we require that $$\mathbb{P}(\cap_{s\in\mathcal{S}}\{\hat{p}_{[1]}-\hat{p}_s\leq D_\delta(X^n)\})\geq1-\delta.$$
Unfortunately, we cannot apply our asymptotic results for this setup. Specifically, the normal approximation of $\hat{p}_{[1]}-p_s$ only applies to a unique maximum regime \citep{xiong2009inference}. In fact, the distribution of $\hat{p}_{[1]}-p_s$ in the presence of multiple maxima depends on the cardinality of $\mathcal{S}$, which is typically unknown and difficult to estimate \citep{xiong2009inference}. However, our finite sample result still hold. Specifically, following (\ref{basic finite}) we require that 
$$\hat{p}_{[1]}-\hat{p}_{s}\leq (\hat{p}_{[1]}-p_{t})+(p_{s}-\hat{p}_{s})\leq D_\delta(X^n)$$
would hold for every $s\in\mathcal{S}$ with high confidence. However, by controlling $\sup_{u\in\mathcal{X}}|p_u-\hat{p}_u(X^n)|$ we control the absolute difference for all the symbols in $\mathcal{X}$, including $s\in \mathcal{S}\subseteq\mathcal{X}$. Once again, it means that Theorem \ref{finite sample regime} holds for this challenging regime. 

To conclude, the asymptotic result (Theorem \ref{asym regime}) only holds for a unique $s$. However, it does apply to the case where the most frequent symbol in the sample is not unique. On the other hand, the more robust finite sample regime holds for both cases. This makes it a more adequate choice for the general case, where we cannot assume that the maximum of $p$ is not unique. Finally, the lower bound \ref{lower_bound} trivially holds for both setups. 

\section{Experiments}\label{experiments}

Let us now illustrate our results in synthetic and real-world experiments. 

\subsection{Synthetic Experiments}

First, we study two example distributions $p$, which are common benchmarks for probability estimation and related problems \citep{orlitsky2015competitive}. The Zipf's law distribution is a typical heavy-tailed benchmark in probability estimation; it is commonly used for modeling natural (real-world) quantities in physical and social sciences, linguistics, economics and others fields \citep{saichev2009theory}. The Zipf's law distribution follows $P(u;s,A)={u^{-s}}/{\sum_{v=1}^A v^{-s}}$ where $A$ is the alphabet size and $s$ is a skewness parameter. In our context, the heavy-tailed Zipf's law distribution corresponds to a case where a (very) small number of algorithms typically outperform all the others. On the other edge, we examine a \textit{near uniform} distribution. Specifically, we uniformly draw a distribution $p$ from the simplex $\mathcal{P}=\{p\;|\sum_up_u=1,\;p_u\geq0  \}$. This setup corresponds to the case where all algorithms perform almost equally. 

In our first experiment we study $m=20$ competing algorithms and draw $n$ samples from $p$. Notice that each sample corresponds to the winning (best performing) algorithm of the $m$ candidates, over each of the $n$ datasets. Given the $n$ samples, we construct a confidence set for $p_{[1]}$, assuming a finite sample (Section \ref{finite regime section}) and an asymptotic regime (Section \ref{asym regime}).  We repeat this process $1000$ times (that is, draw $n$ samples and construct a corresponding $\mathcal{I}_\delta(X^n)$) to attain an estimate of $\mathbb{E}|\mathcal{I}_\delta(X^n)|$ and a coverage rate of $p_{[1]}$. We compare our results to an Oracle who knows the most probable symbol $s$, but is restricted to a confidence set of the form \ref{CI2}. This Oracle is even stronger than our proposed lower bound in Theorem \ref{lower_bound}, as it provides an exact yet unattainable confidence set (due to its knowledge of $s$). Notice that both \cite{gupta1967selection} and \cite{berger1980minimax} are not applicable for our setup due to the relatively large alphabet and sample size. Figure \ref{fig1} demonstrates the results we achieve for the Zipf's law (left) and the near uniform distribution (right) as $n$ increases.

\begin{figure}[ht]
\centering
\includegraphics[width =0.8\textwidth,bb= 25 250 550 590,clip]{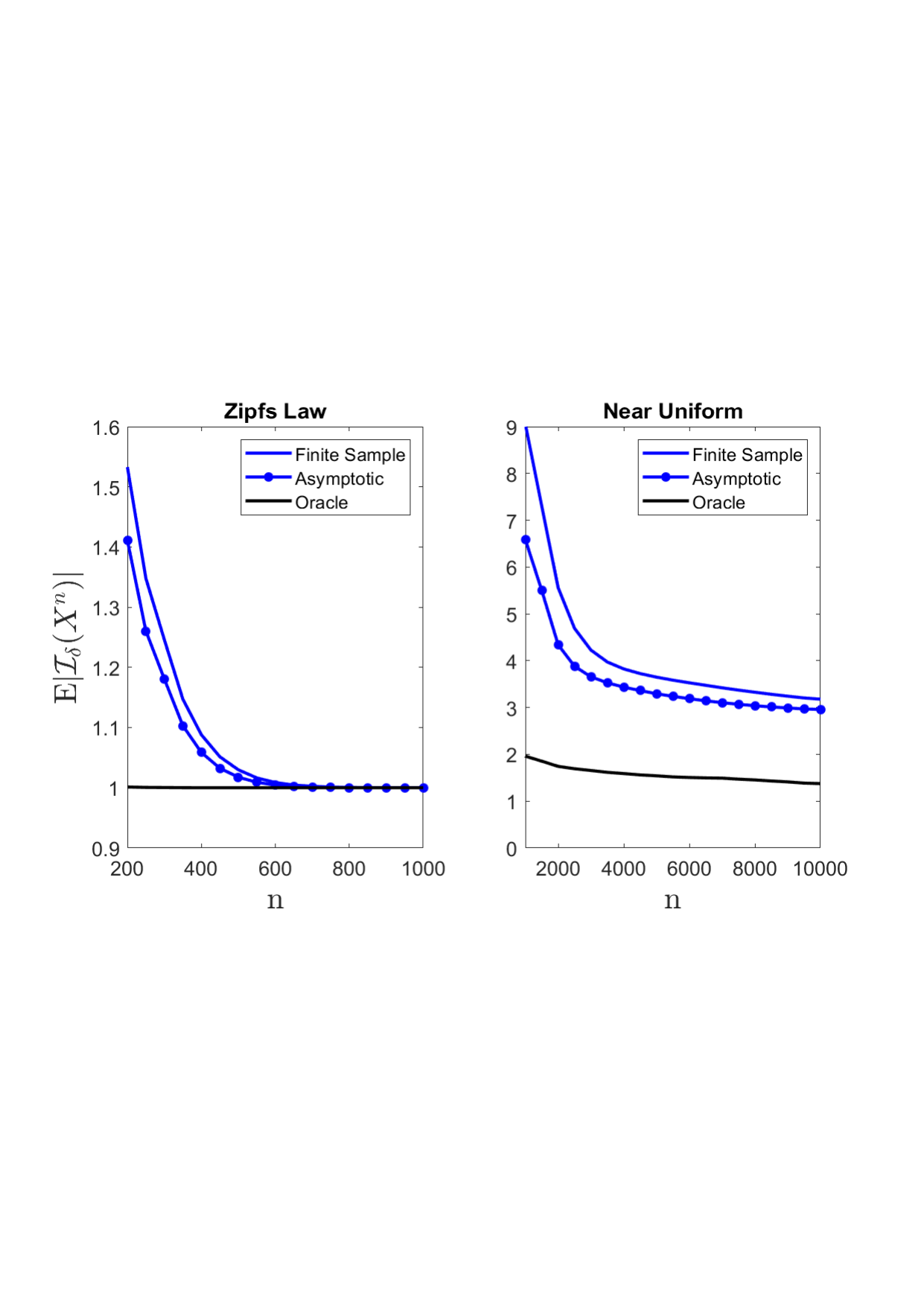}
\caption{The proposed estimators}
\label{fig1}
\end{figure}

First, we observe that both Theorem \ref{finite sample regime} and Theorem \ref{asym regime} schemes converge to the Oracle's performance for relatively small $n$ in the Zipf's law experiment. The reason is fairly straightforward - in this setup $p_{[1]}=0.278$ while $p_{[2]}$ and $p_{[3]}$ are $ 0.139$ and $0.092$ respectively. Hence, the difference between the best-performing algorithm and its runner-ups is relatively large, which makes it easier to be distinguish from the sample. This also explains the optimal performance of the Oracle, which attains a confidence set of size one on average (that is, with $p_{[1]}$ only). Next, we examine the near uniform results. Here, the situation is more challenging and both proposed methods converge more slowly to the Orcale. This is not quite surprising, given that the leading probabilities are $0.094, 0.092, 0.091$ respectively.  Yet, the attained confidence set is of a reasonable size ($\approx 4$ candidate algorithms) compared to the Oracle (with an average size of two candidates). 

\subsection{Real-world Experiments}

We now proceed to real-world experiments. We consider large experimental studies, where multiple key ML algorithms are compared against each other over a benchmark of datasets. 

First, we consider the work of  \cite{fernandez2014we}. In this work, the authors evaluate a total of $179$ classifiers (arising from $17$ basic models), implemented on Weka, R, C and Matlab. They use $121$ data sets, which represent the entire UCI repository and additional real problems, in order to achieve significant conclusions about the classifiers' behavior. They conclude that the classifiers most likely to be the best are the Random Forest (RF) versions. However, the gap with the second best is not statistically significant. We apply our proposed inference scheme to the results of \cite{fernandez2014we}, to identify the candidates for the best-performing classifier. First, notice that the study of Delgao et al. compares an extremely large number of algorithms, which even exceeds the number of datasets. Furthermore, many of the algorithms are practically identical, as they only differ in the platform that they were implemented on. Hence, analyzing the winner among this set of algorithms is very noisy and prune to random results. Therefore, as a first step in reducing unnecessary comparisons, we only study one version of each algorithms. That is, we do not compare the same algorithm, implemented on two different platforms. For simplicity, we focus on the $\text{R}$ classifiers, which consists of $A=36$ algorithms.
The three top performing algorithms according to the MLE criterion are \textit{Random Forest}, \textit{adaBoost} and \textit{K-nn} with estimated probabilities of $0.2564$, $0.1368$, and  $0.0769$, respectively. Applying our proposed inference scheme (Theorem \ref{finite sample regime}), we argue that  \textit{Random Forest} and \textit{ AdaBoost} are the only candidates for the best-performing algorithm, with a confidence level of $0.95$. Further, we apply \citep{hung2019rank} rank verification test to the studied problem. Interestingly, the test does not reject any of the hypotheses, meaning that no ranking is statistically significant in a level of $0.95$. This emphasizes the difference between subset selection and rank verification. Specifically, our proposed subset selection scheme implies that the difference between \textit{Random Forest} and \textit{adaBoost} is not significant, but the difference between \textit{Random Forest} and \textit{K-nn} is. On the other hand, the rank verification only examines the differences between consecutive ranks and concludes that no consecutive rankings are significant. Finally, we apply statistical comparisons as proposed by \cite{demvsar2006statistical}. Following \cite{demvsar2006statistical} recommendation, we first test the hypothsis that all algorithms perform equally well. This test is easily rejected with a negligible p-value. Next, we apply the post-hoc Nemeyni test \citep{nemenyi1963distribution}. This rank-based test shows that $28$ of the $36$ algorithms are not statistically significant worse than the top performing algorithm. This conclusion is not quite surprising, given the relatively low power of this test, as \cite{demvsar2006statistical} indicates. 

Next, we turn to comparative study of  \cite{shmuel2024comprehensive}. In this work, the authors introduce a comprehensive benchmark, aimed at better characterizing the types of datasets where deep learning models excel. They evaluate $111$ datasets with $20$ different models, including both regression and classification tasks. Here, we only focus on the regression analysis with $A=20$ regressors and $n=57$ datasets. The top performing (most wins) algorithms are  \textit{AutoGluon}, \textit{SVM} and \textit{ResNet} with corresponding estimated probabilities of $0.3514$, $0.0901$ and $0.0811$ respectively. Applying our proposed scheme we conclude that the only candidate for best-performing algorithm is \textit{AutoGluon}. This result is consistent with \cite{hung2019rank}, who shows that the difference between \textit{AutoGluon} and \textit{SVM} is statistically significant, but cannot conclude the same for \textit{SVM} and \textit{ResNet}. Further, we apply \cite{demvsar2006statistical} recommendation and reject the null that all algorithms perform equally well. As in the previous example, we proceed with the Nemeyni test. Unfortunately, this test is not powerful enough, with $8$  algorithms that are comparable (not significantly inferior) to the top performing \textit{AutoGluon} algorithm. 

Finally, we revisit the work of \cite{mcelfresh2024neural}. In this work, the authors introduce a benchmark of $176$ tabular datasets, and compare a total of $19$ popular ML algorithms. In addition, they introduce the \textit{TabZilla} Benchmark Suite, which is  a collection of the $36$ ''hardest" of their datasets. The performance of the studied algorithms over TabZilla is provided in Table $4$ of \cite{mcelfresh2024neural}, where they report the performance of the top three algorithms for each dataset. The conclusions of this study are not entirely decisive. The top performing algorithms (according to TabZilla) are CatBoost, XGBoost and LightGBM with  corresponding estimates of $0.2667,0.2667$ and $0.2333$, respectively. Applying our suggested inference scheme to these result, we get that the confidence set $\mathcal{I}_\delta(X^n)$ contains all the $20$ algorithms. In other words, we cannot provide an informative subset of best-performing algorithms. This result aligns with \cite{hung2019rank}, which does not reject any of the null hypotheses. That is, none on the rankings are statistically significant. Finally, we turn to \cite{demvsar2006statistical} recommendation. Here, we cannot reject the null hypothesis which argues that all algorithms perform equally well (with a p-value of almost $1$). Hence, all three schemes agree that no statistically significant conclusions may be drawn from this dataset.

\section{Conclusion}
In this work we introduce a novel inference scheme for the best-performing algorithm. Given the performance of $A$ algorithms over a benchmark of $n$ datasets, our goal is to construct a minimal size subset of algorithms that contains the best-performing algorithm with high confidence. We formulate this problem as subset selection for multinomial distributions and introduce a near optimal solution to the problem. We distinguish between the asymptotic and finite sample regimes, and a matching lower bound which validates the tightness of our results. Finally, we demonstrate the favorable performance of our proposed inference schemes in synthetic and real-world experiments. 

The focus of this work is comparative studies in the machine learning community. Yet, our contribution is not limited to this application of interest. That is, every problem that may be formulated as subset selection for multinomial distributions may benefit from our proposed inference scheme. For example, consider a poll with $A$ candidates, where our goal is to identify the most popular candidate in the population. Our proposed scheme introduces a powerful and statistical valid tool for this exact purpose. We consider additional applications and their required adaptions for our future work.

\section{Proofs}
\subsection{Proof of Theorem \ref{T_data_independent}}\label{A1}
First, notice we have \begin{align}\label{main_data_depented_tight}
    \mathbb{E}\big(\sup_{u\in\mathcal{X}} |p_u-\hat{p}_u(X^n)|\big)^m\overset{(\text{i})}{=}&\mathbb{E}\left(\sup_{u\in\mathcal{X}} (p_u-\hat{p}_u(X^n))^m\right)\overset{(\text{ii})}{\leq}\mathbb{E}\left(\sum_{u\in\mathcal{X}} (p_u-\hat{p}_u(X^n))^m\right)=\\\nonumber
    &\frac{1}{n^m}\sum_{u\in\mathcal{X}}\mathbb{E}(N_u-np_u)^m\overset{(\text{iii})}{=}\frac{1}{n^m} \sum_{u\in\mathcal{X}} \sum_{k=1}^d c_{k,m,n}(p_u(1-p_u))^k
    \end{align}
    where $d=m/2$ and
    \begin{enumerate}[(i)]
    \item follows  from the monotonicity of the power function.
    \item The supremum of non-negative elements is bounded from above by their sum \citep{maddox1988elements}.   
    \item By the definition of the $k^{th}$ central moment of a binomial distribution \citep{skorski2020handy}. 
    \end{enumerate}
    Applying Markov's inequality we obtain
    \begin{align}
    \P\big(\sup_{u\in\mathcal{X}} |p_u-\hat{p}_u(X^n)|\geq a\big)\leq &\frac{1}{a^m}\mathbb{E}\left(\sup_{u\in\mathcal{X}} |p_u-\hat{p}_u(X^n)|\right)^m = \\\nonumber
    &\frac{1}{a^m}\frac{1}{n^m} \sum_{u\in\mathcal{X}} \sum_{k=1}^d c_{k,m,n}(p_u(1-p_u))^k.
    \end{align}
    Setting the right hand side to equal $\delta$ yields 
    $$a=\frac{1}{n}\left(\frac{1}{\delta_1} \sum_{u\in\mathcal{X}} \sum_{k=1}^d c_{k,m,n}(p_u(1-p_u))^k\right)^{1/m}. $$
Therefore, with probability $1-\delta$, we have
\begin{align}\label{ba}
    \sup_{u\in\mathcal{X}} |p_u-\hat{p}_u&(X^n)|\leq \frac{1}{n}\left(\frac{1}{\delta}  \sum_{k=1}^d c_{k,m,n}\sum_{u\in\mathcal{X}} p_u^k(1-p_u)^k\right)^{1/m}.
    \end{align}

\subsection{Proof of Theorem \ref{main_result_1} }\label{proof2}
We begin with the following proposition. 
\begin{proposition}\label{P2_tight}
        Let $\delta_2>0$. Then, with probability $1-\delta_2$, 
\begin{align}
    \sum_{u\in\mathcal{X}}\sum_{k=1}^{m/2} k^{m-k} (&n(p_u(1-p_u))^k\leq \left(\frac{n}{n-1}\right)^d\bigg( \sum_{u\in\mathcal{X}}\sum_{k=1}^{m/2} 
  c_{k,m,n}(\hat{p}_u(1-\hat{p}_u))^k+\epsilon_{n} \bigg)
\end{align}
for every even $m$, where 
\begin{align}
\epsilon_n=\sqrt{\frac{2}{n}\log\left(\frac{1}{\delta_2}\right)}\left(\sup_{p \in [0,1]}\sum_{k=1}^d c_{k,m,n}k\left(p(1-p)\right)^{k-1}(1-2p)+\sum_{k=1}^d\frac{c_{k,m,n}k(k-1)}{n\cdot 2^{2k-3}}\right).
\end{align}
\end{proposition}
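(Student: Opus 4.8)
\textbf{Proof proposal for Proposition \ref{P2_tight}.}

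The plan is to apply McDiarmid's bounded-differences inequality to the random function $g(X^n)=\sum_{u\in\mathcal{X}}\sum_{k=1}^{d} c_{k,m,n}(\hat{p}_u(1-\hat{p}_u))^k$, show that its expectation dominates (up to the $(n/(n-1))^d$ factor) the population quantity $\sum_{u}\sum_{k} c_{k,m,n}(p_u(1-p_u))^k$, and then invert the concentration inequality at level $\delta_2$. First I would treat $g$ as a function of the $n$ i.i.d.\ coordinates $X_1,\dots,X_n$ and bound the effect of changing a single coordinate $X_i$. Replacing one observation changes at most two counts $N_u$ (one decreases by $1$, one increases by $1$), so $\hat{p}_u$ moves by $1/n$ for at most two symbols. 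For a fixed $k$, the map $p\mapsto (p(1-p))^k$ has derivative $k(p(1-p))^{k-1}(1-2p)$ in magnitude, and a second-order Taylor estimate controls the remainder by $\tfrac12\sup|(\,\cdot\,)''|\cdot (1/n)^2$, where the second derivative of $(p(1-p))^k$ is bounded using $\sup_{p}(p(1-p))^{k-1}\le 4^{-(k-1)}=2^{-2(k-1)}$ together with the combinatorial factor $k(k-1)$; this is exactly where the terms $\sup_{p\in[0,1]}\sum_k c_{k,m,n}k(p(1-p))^{k-1}(1-2p)$ and $\sum_k c_{k,m,n}k(k-1)/(n\cdot 2^{2k-3})$ in $\epsilon_n$ originate. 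Summing the two affected symbols and all $k$ from $1$ to $d$ gives the per-coordinate bounded difference $c_i$, and McDiarmid then yields $\mathbb{P}(g(X^n)\le \mathbb{E}g(X^n)-t)\le \exp(-2t^2/\sum_i c_i^2)$ with $\sum_i c_i^2 = n c^2$ for a common bound $c$; setting the right-hand side to $\delta_2$ produces the $\sqrt{(2/n)\log(1/\delta_2)}$ prefactor of $\epsilon_n$.

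Next I would lower-bound $\mathbb{E}g(X^n)$ in terms of the population quantity. Since $N_u\sim\mathrm{Bin}(n,p_u)$, we have $\mathbb{E}[\hat{p}_u(1-\hat{p}_u)] = \mathbb{E}[N_u(n-N_u)]/n^2 = \tfrac{n-1}{n}p_u(1-p_u)$, so for $k=1$ the empirical moment underestimates $p_u(1-p_u)$ only by the factor $(n-1)/n$. For general $k$ one uses that $t\mapsto t^k$ is convex together with an analogous exact identity, or more simply that $\mathbb{E}[(\hat p_u(1-\hat p_u))^k]\ge \big(\tfrac{n-1}{n}\big)^k (p_u(1-p_u))^k$, which follows from Jensen applied appropriately (or directly from $\mathbb{E}[(N_u(n-N_u))^k]\ge (\mathbb{E}[N_u(n-N_u)])^k$ by convexity). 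Multiplying through by $(n/(n-1))^d\ge (n/(n-1))^k$ and rearranging converts the concentration statement $g(X^n)\ge \mathbb{E}g(X^n)-\epsilon_n$ into the claimed inequality, after replacing $c_{k,m,n}$ by its upper bound $k^{m-k}n^k$ (Theorem $4$ of \cite{skorski2020handy}) on the left-hand side, which is where the $\sum_u\sum_k k^{m-k}(n(p_u(1-p_u))^k$ appears.

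The main obstacle I expect is getting the bounded-difference constant genuinely tight — a naive bound $|g(x)-g(x')|\le 2\sum_k c_{k,m,n}$ would be far too crude and would ruin the near-optimal $O(1/\sqrt n)$ rate the paper is after. The delicacy is to use the \emph{local} Lipschitz behaviour of $(p(1-p))^k$ (its derivative vanishes at $p=1/2$ and is small near the endpoints once $k\ge 2$) rather than a global Lipschitz constant, and to carry the second-order correction term separately; this is exactly the structure visible in the two summands of $\epsilon_n$, and matching them precisely is the only nonroutine part of the argument. A secondary technical point is justifying the passage from the per-symbol $1/n$ perturbation of $\hat p_u$ to the perturbation of $(\hat p_u(1-\hat p_u))^k$ uniformly over the (unknown) value of $\hat p_u\in[0,1]$, which the $\sup_{p\in[0,1]}$ in $\epsilon_n$ handles.
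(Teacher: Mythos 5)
Your proposal follows essentially the same route as the paper's proof: McDiarmid applied to $\psi(n,d,\hat{p})=\sum_{u}\sum_{k}c_{k,m,n}(\hat{p}_u(1-\hat{p}_u))^k$ with the bounded-difference constant obtained from a second-order Taylor expansion of $p\mapsto\sum_k c_{k,m,n}(p(1-p))^k$ under a $\pm 1/n$ perturbation of two symbols, combined with the lower bound $\mathbb{E}[(\hat p_u(1-\hat p_u))^k]\ge\big(\tfrac{n-1}{n}\big)^k(p_u(1-p_u))^k$ via Jensen and the exact identity $\mathbb{E}[\hat p_u(1-\hat p_u)]=\tfrac{n-1}{n}p_u(1-p_u)$, then inverting at level $\delta_2$ — this is precisely the paper's argument, and it is sound. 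The one slip is your closing remark that the left-hand side's $k^{m-k}n^k$ arises by ``replacing $c_{k,m,n}$ by its upper bound'': that substitution goes the wrong way (enlarging the left side of a $\le$ does not preserve it); the paper's own proof only establishes the inequality with $c_{k,m,n}$ on the left, and the $k^{m-k}$ form in the proposition statement appears to be a typo rather than something either proof actually delivers.
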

\begin{proof}
    Define $\psi(n,d,\hat{p})=\sum_{u\in\mathcal{X}}\sum_{k=1}^dc_{k,m,n}(\hat{p}_u(1-\hat{p}_u))^k$.
    McDiarmid's inequality yields 
    \begin{align}\nonumber   
    \text{P}\left(\psi(n,d,\hat{p})-\mathbb{E}\left(\psi(n,d,\hat{p})\right)\leq -\epsilon_n  \right)\leq \exp\left(\frac{-2\epsilon_n^2}{\sum_{j=1}^n c_j^2} \right)
    \end{align}
    where $c_j$ is defined as 
    \begin{align}
        \sup_{x'_j}\big|\psi(n,d,\hat{p})-\psi(n,d,\hat{p}')\big|\leq c_j.
    \end{align}
    Here, $\hat{p}'$ is the MLE over the same sample $x^n$, but with a different $j^{th}$ observation, $x_j'$.
    First, let us find $c_j$. We have
    \begin{align}\label{c_j_tight_pre}
    &\sup_{x'_j }\big|\psi(n,d,\hat{p})-\psi(n,d,\hat{p}')\big|\leq\\\nonumber 
    &\sup_{p\in [0,1-1/n]}2\big|\sum_{k=1}^dc_{k,m,n}\left(p(1-p)\right)^k-\sum_{k=1}^dc_{k,m,n}\left((p+1/n)(1-(p+1/n))\right)^k)\big|
    \end{align}
independently of $j$, where the inequality follows from the fact that changing a single observation effects only two symbols (for example, $\hat{p}_l$ and $\hat{p}_t$), where the change is $\pm 1/n$. 

Now, we would like to bound from above (\ref{c_j_tight_pre}). Denote $f(p)=\sum_{k=1}^dc_{k,m,n}p^k(1-p)^k$. Applying Taylor series to $f(p+1/n)$ around $f(p)$ yields
$$f\left(p+\frac{1}{n}\right)=f(p)+\frac{1}{n}f'_k(p)+r(p)$$
where $r(p)=\frac{1}{2!}\frac{1}{n^2}f''(c)$ is the residual and $c \in [p, p+1/n]$ \citep{stromberg2015introduction}. 
We have
\begin{align}\label{derivatives}
    &f'(p)=\sum_{k=1}^dc_{k,m,n}k\left(p(1-p)\right)^{k-1}(1-2p)\\\nonumber
    &f''(p)=\sum_{k-1}^dc_{k,m,n}k(k-1)(p(1-p))^{k-2}(1-2p)^2-2k(p(1-p))^{k-1}\leq\\\nonumber
    &\quad\quad\quad\;\sum_{k-1}^dc_{k,m,n}k(k-1)(p(1-p))^{k-2}.
\end{align}
Hence,
\begin{align}\label{Appendix A replaced}
&\sup_{p\in [0,1-1/n]}2\big|\sum_{k=1}^dc_{k,m,n}\left(p(1-p)\right)^k-\sum_{k=1}^dc_{k,m,n}\left((p+1/n)(1-(p+1/n))\right)^k)\big|=\\\nonumber
&\sup_{p \in [0,1-1/n]}2\big|-\frac{1}{n}f'(p)-\frac{1}{2!}\frac{1}{n^2}f''(c)\big|\leq\sup_{p \in [0,1-1/n]}\frac{2}{n}\big|f'(p)\big|+\frac{2}{2!}\frac{1}{n^2}\big|f''(c)\big|\overset{(\text{i})}{\leq}\\\nonumber
&\sup_{p \in [0,1]}\frac{2}{n}\sum_{k=1}^d c_{k,m,n}k\left(p(1-p)\right)^{k-1}(1-2p)+\sup_{p \in [0,1]}\frac{1}{n^2}\sum_{k=1}^dc_{k,m,n}k(k-1)(p(1-p))^{k-2}\overset{(\text{ii})}{\leq}\\\nonumber
&\sup_{p \in [0,1]}\frac{2}{n}\sum_{k=1}^d c_{k,m,n}k\left(p(1-p)\right)^{k-1}(1-2p)+\sum_{k=1}^d\frac{c_{k,m,n}k(k-1)}{n^2\cdot 4^{k-2}}
\end{align}
where 
\begin{enumerate}
    \item follows from (\ref{derivatives}).
    \item follows from the concavity and the symmetry of $\left(p(1-p)\right)^{k}$ for $k\geq 1$. 
\end{enumerate}
Denote 
$$\phi_{n,d}=\sup_{p \in [0,1]}\frac{2}{n}\sum_{k=1}^d c_{k,m,n}k\left(p(1-p)\right)^{k-1}(1-2p)+\sum_{k=1}^d\frac{c_{k,m,n}k(k-1)}{n^2\cdot 4^{k-2}}.$$
Therefore, we have 
\begin{align}\label{c_j_tight}
    &\sup_{x'_j}\big|\psi(n,d,\hat{p})-\psi(n,d,\hat{p}')\big|{\leq} \phi_{n,d}.
    \end{align}
Next, 
\begin{align}\nonumber
\mathbb{E}(\psi(n,d,\hat{p}))\geq& \sum_{u\in\mathcal{X}}\sum_{k=1}^d c_{k,m,n}\left(\mathbb{E}(\hat{p}_u(1-\hat{p}_u)) \right)^k=\\\nonumber
&\sum_{u\in\mathcal{X}}\sum_{k=1}^d c_{k,m,n} \left(\left(1-\frac{1}{n}\right)p_u(1-p_u)\right)^k\geq\\
& \left(1-\frac{1}{n}\right)^d\sum_{u\in\mathcal{X}}\sum_{k=1}^d c_{k,m,n}p_u(1-p_u))^k=\left(1-\frac{1}{n}\right)^d\psi(n,d,p)\label{V1_tight}
\end{align}
where the first inequality follows from Jensen Inequality, the equality that follows is due to $\mathbb{E}(\hat{p}_i(1-\hat{p}_i))=\mathbb{E}(\hat{p}_i)-\text{Var}(\hat{p}_i)-\mathbb{E}^2(\hat{p}_i)=p(1-p)(1-1/n)$, and the last inequality follows from $(1-1/n)^k\geq(1-1/n)^d$ for $1\leq k\leq d$. Going back to McDiarmid's inequality, we have 
\begin{align}\label{ineq1} \P \left(\mathbb{E}\psi(n,d,\hat{p})\geq \psi(n,d,\hat{p})+\epsilon_n  \right)\leq \exp\left(\frac{-2\epsilon_n^2}{n \phi_{n,d}^2} \right)
\end{align} 
In word, the probability that the random variable $Z=\psi(n,d,\hat{p})$ is smaller than a constant $C=\mathbb{E}(\psi(n,d,\hat{p}))-\epsilon_n$ is not greater that $\nu=\exp\left(-2\epsilon^2/{n\phi_{n,d}^2} \right)$. Therefore, it necessarily means that the probability that $Z$ is smaller than a constant smaller than $C$, is also not greater than $\nu$. Hence, plugging  (\ref{V1_tight})  we obtain
\begin{align}\nonumber
\P\left((1-1/n)^d\psi(n,d,p)\geq \psi(n,d,\hat{p})+\epsilon_n  \right)\leq \exp\left(\frac{-2\epsilon_n^2}{n\phi_{n,d}^2} \right)
\end{align}
Setting the right hand side to equal $\delta_2$ we get 
\begin{align}
\epsilon_n=&\sqrt{\frac{n}{2}\log\left(\frac{1}{\delta_2}\right)}\phi_{n,d}=\\\nonumber
&\sqrt{\frac{2}{n}\log\left(\frac{1}{\delta_2}\right)}\left(\sup_{p \in [0,1]}\sum_{k=1}^d c_{k,m,n}k\left(p(1-p)\right)^{k-1}(1-2p)+\sum_{k=1}^d\frac{c_{k,m,n}k(k-1)}{n\cdot 2^{2k-3}}\right).
\end{align} and with probability $1-\delta_2$, 
\begin{align}
    \sum_{u\in\mathcal{X}}\sum_{k=1}^d c_{k,m,n}& (p_u(1-p_u))^k\leq\left(\frac{n}{n-1}\right)^d \left( \sum_{u\in\mathcal{X}}\sum_{k=1}^d 
  c_{k,m,n}(\hat{p}_u(1-\hat{p}_u))^k+\epsilon_n \right).
\end{align}.
\end{proof}
Finally, we apply the union bound to (\ref{ba}) with $\delta=\delta_1$ and Proposition  \ref{P2_tight} to obtain the stated Theorem $1$.

\subsection{A Proof for Corollary \ref{T2_data_dependent}}\label{proof3}
Let us examine the terms of Theorem \ref{main_result_1}. 
First, notice that $c_{k,m,n}=O(n^k)$ \citep{skorski2020handy}. Therefore, 
\begin{align}
\sum_{u\in\mathcal{X}}\sum_{k=1}^{d}c_{k,m,n}\hat{p}_u^k(1-\hat{p}_u)^k=c_{d,m,n}\sum_{u\in\mathcal{X}}\hat{p}_u^d(1-\hat{p}_u)^d+O\left(n^{d-1}\right).
\end{align}
Further, we have that
\begin{align}
\epsilon_n=&\sqrt{\frac{2}{n}\log\left(\frac{1}{\delta_2}\right)}\left(\sup_{p \in [0,1]}\sum_{k=1}^d c_{k,m,n}k\left(p(1-p)\right)^{k-1}(1-2p)+\sum_{k=1}^d\frac{c_{k,m,n}k(k-1)}{n\cdot 2^{2k-3}}\right)=\\\nonumber
&\sqrt{\frac{2}{n}\log\left(\frac{1}{\delta_2}\right)}\left(\sup_{p \in [0,1]} c_{d,m,n}d\left(p(1-p)\right)^{d-1}(1-2p)+O(n^{d-1})\right)=\\\nonumber
&\left(\frac{c_{d,m,n}}{\sqrt{n}}\right)\sqrt{2\log\left(\frac{1}{\delta_2}\right)}\sup_{p \in [0,1]} d\left(p(1-p)\right)^{d-1}(1-2p)+O(n^{d-3/2})=O\left(n^{d-1/2}\right).
\end{align}
Hence, 
\begin{align}
    \left(\sum_{u\in\mathcal{X}}\sum_{k=1}^{d}c_{k,m,n}\hat{p}_u^k(1-\hat{p}_u)^k+\epsilon_n\right)^{1/m}\leq c^{1/m}_{d,n}\left(\sum_{u\in\mathcal{X}}\hat{p}_u^d(1-\hat{p}_u)^d\right)^{1/m}+O\left(n^{1/2-1/2m}\right)
\end{align}
In addition, we have that 
\begin{align}
    \sqrt{\frac{n}{n-1}}=\sqrt{1+\frac{1}{n-1}}\leq 1+\frac{1}{2(n-1)}=1+O\left(\frac{1}{n}\right)
\end{align}
where the inequality follows from Bernoulli's Inequality. 
Putting together the above we obtain
\begin{align}
    &\frac{1}{n}\sqrt{\frac{n}{n-1}}
\left(\frac{1}{\delta_1}\left( \sum_{u\in\mathcal{X}}\sum_{k=1}^{m/2}c_{k,m,n}(\hat{p}_u(1-\hat{p}_u))^k+\epsilon_{n}\right)\right)^{1/m}=\\\nonumber
&\frac{1}{n}\left(\frac{c_{d,m,n}}{\delta_1}\right)^{1/m}\left( \sum_{u\in\mathcal{X}} (\hat{p}_u(1-\hat{p}_u))^{m/2}\right)^{1/m}+O\left(\frac{1}{n^{0.5(1+1/m)}}\right).
\end{align}
Finally, we minimize the coefficient of the empirical term.
First, Theorem $4$ of \cite{skorski2020handy} shows that $c_{d,m,n}\leq (dn)^d$. Hence,  
\begin{align}
    \left(\frac{c_{d,m,n}}{\delta_1}\right)^{1/m}\leq  \left(\frac{1}{\delta_1}\right)^{1/m}(nd)^{1/2}
\end{align}
and
\begin{align}
    m^*=\argmin  \left(\frac{1}{\delta_1}\right)^{1/m}\left(\frac{nm}{2}\right)^{1/2}=\argmin  \frac{1}{m}\log\left(\frac{1}{\delta_1}\right)+\frac{1}{2}\log\left(\frac{nm}{2}\right).
\end{align}
A simple derivation attains $m^*=2\log\frac{1}{\delta_1}$ and
$$\left(\frac{c_{d,m,n}}{\delta_1}\right)^{1/m^*}\leq \sqrt{\exp(1)n\log(1/\delta_1)}$$

\subsection{A proof for Theorem \ref{lower_bound}}\label{lower bound proof}
Let $r$ be the second most probable symbol. That is, $p_{r}=p_{[2]}$. We have that
\begin{align}\label{first}
    \hat{p}_{[1]}-\hat{p}_s\geq \hat{p}_{r}-\hat{p}_{s}\stackrel{\cdot}{\sim}\mathcal{N}(\mu_{rs},\sigma^2_{rs})
\end{align}
where $\mu_{rs}=p_r-p_s$ and $\sigma_{rs}=\sqrt{(p_r(1-p_r)+p_s(1-p_s))-2p_rp_s)/n}$. Hence, 
\begin{align}\label{CDF}
    \mathbb{P}(\hat{p}_{r}-\hat{p}_{s}\leq T_{rs})=\phi\left(\frac{T_{rs}-\mu_{rs}}{\sigma_{rs}}\right)
\end{align}
where $\phi$ is the standard normal cumulative distribution function. Setting (\ref{CDF}) to equal $1-\delta$ yields $T^*_{rs}=\mu_{rs}+z_\delta\sigma_{rs}$.
In other words, 
\begin{align}\label{exact}
    \mathbb{P}(\hat{p}_{r}-\hat{p}_{s}\leq T^*_{rs})=1-\delta.
\end{align}

Now, assume that we found $T$ that satisfies $\mathbb{P}( \hat{p}_{[1]}-\hat{p}_{s}\leq T)\geq 1-\delta$ and $T<T_{rs}^*$. This implies
$$ \mathbb{P}(\hat{p}_{r}-\hat{p}_{s}\leq T)\geq \mathbb{P}(\hat{p}_{[1]}-\hat{p}_{s}\leq T)\geq1-\delta$$
where the first inequality follows from (\ref{first}). This contradicts (\ref{exact}), as desired. Finally, setting $p_r - p_s=O(1/n)$ attains the desired result.

\subsection{A Proof for  Proposition \ref{add-on} }\label{labeled_appendix}
\begin{proposition} \label{add-on}
    Let $p$ be a probability distribution over $\mathcal{X}$. Then, 
    \begin{align}\label{A22}
        p_{[1]}(1-p_{[1]})=\max_{u\in\mathcal{X}}p_u(1-p_u)
    \end{align} where $p_{[1]}=\max_{u\in\mathcal{X}}p_u$ is the largest element in $p$. 
\end{proposition}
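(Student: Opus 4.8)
The plan is to exploit the elementary structure of the scalar function $g(x) = x(1-x)$ on $[0,1]$: it is continuous, symmetric about $1/2$ (i.e.\ $g(x) = g(1-x)$), strictly increasing on $[0,1/2]$, strictly decreasing on $[1/2,1]$, and attains its maximum at $x = 1/2$. Since $p_u \in [0,1]$ for every $u$, the claim $p_{[1]}(1-p_{[1]}) = \max_{u\in\mathcal{X}} p_u(1-p_u)$ amounts to showing $g(p_u) \le g(p_{[1]})$ for all $u \in \mathcal{X}$, with equality attained at $u = \argmax_u p_u$ (which holds trivially). So it suffices to prove the inequality $g(p_u)\le g(p_{[1]})$ for an arbitrary $u$.

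First I would split into two cases according to the size of $p_{[1]}$. If $p_{[1]} \le 1/2$, then $p_u \le p_{[1]} \le 1/2$ for every $u$, so both points lie in the region where $g$ is increasing, and monotonicity immediately gives $g(p_u) \le g(p_{[1]})$.

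If instead $p_{[1]} > 1/2$, the key step is to use $\sum_{v\in\mathcal{X}} p_v = 1$: for any $u$ other than the maximizer we have $p_u \le 1 - p_{[1]} < 1/2$. Since $g$ is increasing on $[0,1/2]$, this yields $g(p_u) \le g(1-p_{[1]})$, and then symmetry gives $g(1-p_{[1]}) = g(p_{[1]})$, so $g(p_u) \le g(p_{[1]})$ again. Combining the two cases, $g(p_{[1]}) = \max_{u\in\mathcal{X}} g(p_u)$, which is the claim.

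I do not expect a genuine obstacle here; the only point requiring a moment's care is the case $p_{[1]} > 1/2$, where one must remember that the competitors are bounded by $1 - p_{[1]}$ (not merely by $p_{[1]}$) and then invoke the symmetry $g(x)=g(1-x)$ rather than monotonicity across $1/2$. If one prefers to avoid the case split, an alternative is to note directly that $g(p_u) - g(p_{[1]}) = (p_{[1]} - p_u)\big((p_{[1]} + p_u) - 1\big)$ and observe that the first factor is $\ge 0$ by maximality while the second is $\le 0$ because $p_{[1]} + p_u \le \sum_v p_v = 1$; this one-line factorization proves the inequality for all $u$ simultaneously and may be the cleanest route to include.
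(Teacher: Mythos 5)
Your proof is correct and follows essentially the same route as the paper's: a case split on whether $p_{[1]}$ exceeds $1/2$, using monotonicity of $x(1-x)$ on $[0,1/2]$ in the first case and the constraint $p_u \le 1-p_{[1]}$ together with the symmetry $g(x)=g(1-x)$ in the second. The one-line factorization $g(p_u)-g(p_{[1]})=(p_{[1]}-p_u)\left((p_{[1]}+p_u)-1\right)$ you offer as an alternative is a cleaner, case-free version of the same fact and would be a worthwhile simplification.
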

\begin{proof}
Let us first consider the case where $p_u\leq 1/2$ for all $u \in \mathcal{X}$. Then (\ref{A22}) follows directly from the montonicity of $p_u(1-p_u)$ for $p_u \in [0,1/2]$.  
Next, assume there exists a single $p_j>1/2$. Specifically, $p_j=1/2+a$ for some positive $a$. Then, the remaining $p_u$'s are necessarily smaller than $1/2$. Further, the supremum of $p_u(1-p_u)$ over $u\neq j$ is attainable at $p_u=1/2-a$, from the same monotonicity reason. This means that $\max_{u \neq j }p_u(1-p_u)\leq(1/2-a)(1-(1/2-a))=(1/2+a)(1-(1/2+a))$ where the second equality follows from the symmetry of $p_u(1-p_u)$ around $p_u=1/2$. 
\end{proof}

\bibliographystyle{plainnat}
\bibliography{bibi}  
\end{document}